\documentclass{article}
\usepackage{algorithm}
\usepackage{algpseudocode}
\usepackage{amssymb}

\usepackage[english]{babel}

\usepackage[letterpaper,top=2cm,bottom=2cm,left=3cm,right=3cm,marginparwidth=1.75cm]{geometry}

\usepackage{amsmath}
\usepackage{amsthm}

\usepackage{graphicx}
\usepackage{booktabs}
\usepackage{multirow}
\usepackage{tikz}
\usepackage{caption}
\usepackage{placeins}
\usetikzlibrary{matrix, positioning, calc, decorations.pathreplacing}
\usepackage[colorlinks=true, allcolors=blue]{hyperref}

\newtheorem{theorem}{Theorem}
\newtheorem{lemma}{Lemma}
\newtheorem{proposition}{Proposition}

\title{MXAttention: Data-Free Optimal Scaling and Pre-Normalization Quantization for MXFP4 Attention}

\author{
\parbox{\textwidth}{
\centering
Jianlin Yu, Jing Lin, Linghui Kong, Aiyue Chen, Weiyi Sun, Chenyu Zeng, Wangli Lan, Jinxi Li\\
Zhuo Zheng, Ziyang Yue, Danning Ke, Fei Yi, Tianchi Hu, Yuan Ding, Yiwu Yao, Junsong Wang\thanks{Corresponding author.}\\
Huawei Technologies Co., Ltd.
}
}

\begin{document}
\maketitle

\begin{abstract}
The quadratic cost of attention is a major bottleneck in diffusion-based video generation models. MXFP4 attention offers a promising path toward lower-cost inference, but directly applying standard MXFP4 quantization to attention often degrades generation quality. We attribute this degradation primarily to two numerical failure modes: power-of-two shared scaling creates a clipping--underflow trade-off within MXFP4 blocks, while direct MXFP4 quantization in the softmax loop breaks row-wise normalization, so the induced attention weights no longer sum to one after quantization. To address these issues, we propose \textbf{MXAttention}, a data-free post-training quantization framework for MXFP4 attention that achieves near-lossless generation quality through two key components. First, \textbf{Universal Optimal Scaling (UOS)} leverages the periodic structure induced by power-of-two microscaling to minimize a global MXFP4 quantization-error objective, deriving the closed-form, distribution-independent scaling boundary \(Q_{\max}=7.25\) without calibration or per-layer search. Second, \textbf{Pre-Normalization Quantization (PNQ)} quantizes unnormalized softmax exponentials before the row-wise summation, guaranteeing that the induced attention probabilities sum to one exactly and preventing row-sum errors from accumulating in the attention output. As a fully \textbf{data-free} method, MXAttention requires neither calibration nor QAT. Empirically, across Wan2.2 and HunyuanVideo, MXAttention closes at least 95\% of the VBench Imaging Quality gap between vanilla OCP MXFP4 and FP16, while substantially improving frame-level similarity. It preserves \textbf{FP16-level generation quality} with less than 0.01 absolute degradation on all reported VBench metrics and achieves performance competitive with strong NVFP4-based baselines with negligible overhead when fused into the attention pipeline. The implementation of MXAttention has been integrated into the main branch of MindIE-SD and is publicly available at
\url{https://gitcode.com/Ascend/MindIE-SD/tree/master/mindiesd}.
\end{abstract}

\section{Introduction}
\label{sec:intro}

Video diffusion Transformers process spatiotemporal token sequences whose length grows with spatial resolution and video duration~\cite{hunyuanvideo,wanvideo,cogvideox}. At every denoising step, the model performs a full forward pass through its Transformer blocks. Although fast samplers, distillation, and caching reduce the number of model evaluations or reuse intermediate computation~\cite{lcm,pcm,teacache,pab}, the cost of each executed forward pass remains substantial. Attention computation is therefore a key target for low-precision acceleration: its two dominant operations, \(QK^\top\) and \(PV\), are GEMMs that can directly exploit native FP4 throughput when \(Q\), \(K\), \(V\), and the softmax-path values can be quantized with sufficiently low error~\cite{zhang2025sageattention3,attn_qat}.

Native FP4 GEMM support and standardized microscaling formats have made 4-bit attention increasingly practical. MXFP4, defined by the OCP MX specification, is an open standard format supported across multiple accelerator families, including NVIDIA Blackwell GPUs, AMD Instinct MI350 series, and Ascend 950 series~\cite{ocp2023microscaling,nvidia_blackwell_mxfp4,amd_mi350_mxfp4,huawei_ascend950_mxfp4}. It represents E2M1 values in 32-element blocks using a shared E8M0 power-of-two scale. Alongside this open standard, NVIDIA introduced NVFP4, which adopts a finer 16-value block granularity with E4M3 block scales and an additional tensor-level scale~\cite{nvidia_nvfp4,transformer_engine_fp4}. MXFP4 reduces scale metadata overhead and enables simpler scale handling in low-precision GEMMs, but its coarser block granularity and power-of-two scaling restriction make accurate attention quantization challenging. Directly applying standard MXFP4 quantization to attention can therefore cause substantial degradation in generated video quality.

Recent work has explored low-bit attention from several directions. Rotation-based PTQ methods such as QuaRot and SpinQuant use orthogonal transformations, including Hadamard transforms, to suppress activation outliers and have become widely used in low-bit quantization pipelines~\cite{quarot,spinquant}. These techniques improve the numerical distributions of \(Q\) and \(K\), but do not directly address the format-specific scaling behavior of MXFP4, which can limit their effectiveness for MXFP4 attention. Attention-specific kernels such as the SageAttention series demonstrate the practicality of low-bit attention through smoothing, scaling, and kernel-level optimization~\cite{zhang2025sageattention,zhang2024sageattention2,zhang2025sageattention2plus,zhang2025sageattention3}. Quantization-aware training (QAT) further improves FP4 attention by adapting the model during training~\cite{attn_qat}. Recent MXFP4 quantization methods improve block quantization through overflow-aware scaling, adaptive scale selection, and metadata augmentation~\cite{oas,scalesearch,m2xfp}. These methods improve quantization accuracy through block-dependent decisions, empirical optimization, or additional format flexibility, whereas MXAttention derives a fixed, data-free MXFP4 scaling boundary. Video-diffusion quantization methods also exploit spatiotemporal structure to quantize model weights and activations, but generally do not address the numerical constraints inside the fused attention softmax loop~\cite{deltaquant}. MXAttention instead targets a fully data-free PTQ method for MXFP4 attention, without calibration, QAT, or per-layer search.

We characterize two numerical failure modes in directly quantized MXFP4 attention. First, power-of-two shared scaling creates a clipping--underflow trade-off within each MXFP4 block. A smaller shared scale preserves the resolution of smaller values but exposes larger values to saturation, whereas a larger shared scale reduces clipping at the cost of increased rounding and underflow. Second, directly inserting MXFP4 quantization into the FlashAttention online-softmax loop breaks row-wise normalization~\cite{dao2022flashattention,dao2023flashattention2,shah2024flashattention3}. In this setting, the unnormalized softmax exponential tile is quantized for the output-accumulator update, while the row-wise sum is accumulated from its unquantized counterpart. Because the two updates use different representations, the induced attention weights no longer necessarily sum to one.

To address these issues, we propose \textbf{MXAttention}, a data-free post-training quantization framework for MXFP4 attention. First, \textbf{Universal Optimal Scaling (UOS)} identifies a periodic structure induced by power-of-two microscaling and shows that it makes the optimal scaling boundary independent of the block-maximum distribution. Based on this observation, UOS derives the closed-form boundary \(Q_{\max}=7.25\) without calibration or per-layer search. Second, \textbf{Pre-Normalization Quantization (PNQ)} quantizes the unnormalized softmax exponentials before both the row-wise sum and output-accumulator updates. By using the same quantized values in both updates, PNQ preserves row-wise normalization by construction and avoids the additional scaling error caused by inconsistent quantization paths. Figure~\ref{fig:mxattention_workflow} provides an overview of MXAttention, which integrates Universal Optimal Scaling (UOS) and Pre-Normalization Quantization (PNQ) into the FlashAttention pipeline.

\begin{figure*}[t]
    \centering
    \includegraphics[width=\textwidth]{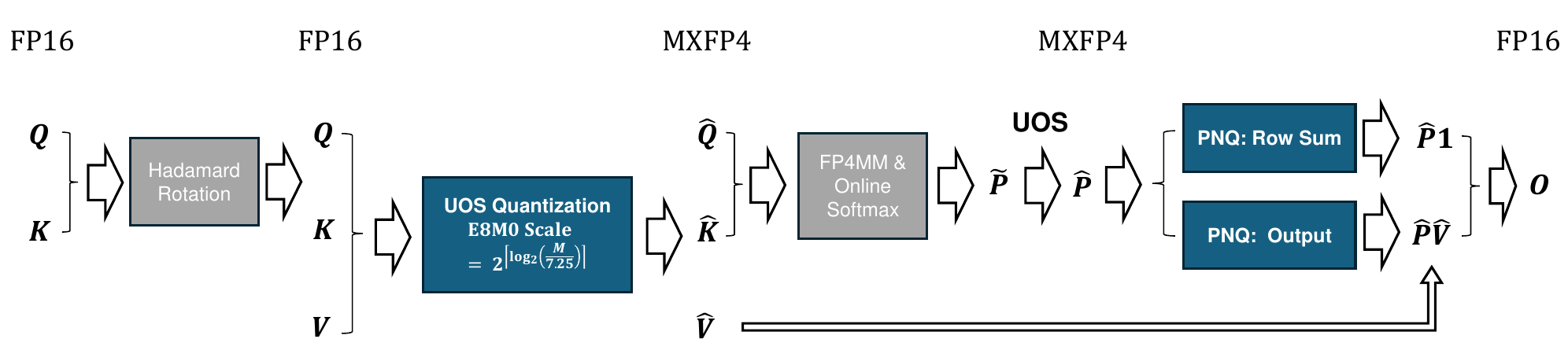}
    \caption{\textbf{Overview of MXAttention.} MXAttention integrates Universal Optimal Scaling (UOS) and Pre-Normalization Quantization (PNQ) into the FlashAttention pipeline. A fixed Hadamard rotation is applied to \(Q/K\) for outlier suppression. UOS applies the distribution-independent MXFP4 scaling boundary \(Q_{\max}=7.25\) to all MXFP4-quantized attention tensors. During the online-softmax loop, PNQ quantizes the unnormalized softmax tile and reuses the same quantized tile for both row-wise sum and output-accumulator updates, preserving row-wise normalization by construction.}
    \label{fig:mxattention_workflow}
\end{figure*}

Our main contributions are:
\begin{itemize}
    \item We characterize two numerical failure modes that arise when MXFP4 quantization is directly applied to attention: the clipping--underflow trade-off induced by power-of-two block scaling and the row-wise normalization error caused by inconsistent quantization in the online-softmax loop.

    \item We identify a periodic structure induced by power-of-two microscaling and show that it makes the optimal MXFP4 scaling boundary independent of the block-maximum distribution. Based on this observation, \textbf{Universal Optimal Scaling (UOS)} derives a single fixed boundary \(Q_{\max}=7.25\) for selecting the shared power-of-two scale of each MXFP4 block and proves that it globally minimizes the proposed quantization-error objective without calibration or per-layer search.

    \item We introduce \textbf{Pre-Normalization Quantization (PNQ)}, which uses the same quantized softmax exponential tiles for both the row-wise sum and output-accumulator updates, preserving row-wise normalization by construction and avoiding the additional scaling error introduced by mixing quantized and unquantized paths.

    \item We evaluate MXAttention on Wan2.2 and HunyuanVideo. Across both models, MXAttention closes at least 95\% of the VBench Imaging Quality gap between vanilla OCP MXFP4 and FP16, substantially improves frame-level similarity, and preserves FP16-level generation quality, remaining within 0.01 of or exceeding FP16 on every reported VBench metric. It also achieves performance competitive with strong NVFP4-based baselines with negligible algorithmic overhead and a fusion-friendly design.
\end{itemize}

\section{Preliminaries}
\label{sec:preliminaries}

\paragraph{Attention and FlashAttention.}
Given query, key, and value matrices \(Q\), \(K\), and \(V\), scaled dot-product attention computes
\begin{equation}
S=\frac{QK^\top}{\sqrt{d}}, \qquad P=\operatorname{Softmax}(S), \qquad O=PV,
\label{eq:standard_attention}
\end{equation}
where every row of \(P\) satisfies \(\sum_j P_{ij}=1\). A conventional implementation materializes the score matrix \(S\) and probability matrix \(P\), whose sizes grow quadratically with sequence length. FlashAttention avoids writing these matrices to HBM by partitioning \(Q\), \(K\), and \(V\) into blocks and evaluating attention through an online-softmax recurrence in on-chip memory~\cite{dao2022flashattention,dao2023flashattention2,shah2024flashattention3}.

For the recurrence below, we use the FlashAttention-2 form, which maintains an unnormalized output accumulator and applies the final normalization after all key/value blocks have been processed. We reserve \(P\) for the final normalized attention matrix, denote the unnormalized softmax exponential tile by \(\widetilde P_{ij}\), and denote the in-loop unnormalized output accumulator by \(\widetilde O_i^{(j)}\).

For query block \(Q_i\) and key/value blocks \(K_j,V_j\), define
\begin{equation}
S_{ij}=\frac{Q_iK_j^\top}{\sqrt{d}}.
\end{equation}
FlashAttention maintains a row-wise running maximum \(m_i^{(j)}\), a row-wise sum of exponentials \(\ell_i^{(j)}\), and an unnormalized output accumulator \(\widetilde O_i^{(j)}\). Starting from \(m_i^{(0)}=-\infty\), \(\ell_i^{(0)}=0\), and \(\widetilde O_i^{(0)}=0\), it updates
\begin{align}
m_i^{(j)} &= \max\!\left(m_i^{(j-1)},\operatorname{rowmax}(S_{ij})\right), \\
\alpha_i^{(j)} &= \exp\!\left(m_i^{(j-1)}-m_i^{(j)}\right), \\
\widetilde P_{ij} &= \exp\!\left(S_{ij}-m_i^{(j)}\mathbf{1}_{B_c}^{\top}\right), \\
\ell_i^{(j)} &= \alpha_i^{(j)}\odot \ell_i^{(j-1)}+\widetilde P_{ij}\mathbf{1}_{B_c}, \\
\widetilde O_i^{(j)} &= \operatorname{Diag}\!\left(\alpha_i^{(j)}\right)\widetilde O_i^{(j-1)}+\widetilde P_{ij}V_j.
\label{eq:online_attention}
\end{align}
Here, \(\widetilde P_{ij}\) contains the unnormalized softmax exponentials and enters the second attention GEMM; it is not the final normalized probability tile. After all \(T_c\) key/value blocks have been processed,
\begin{equation}
O_i=\operatorname{Diag}\!\left(\ell_i^{(T_c)}\right)^{-1}\widetilde O_i^{(T_c)}.
\label{eq:online_attention_output}
\end{equation}
FlashAttention therefore computes the normalized output from the row-wise sum \(\ell_i^{(T_c)}\) and the unnormalized output accumulator \(\widetilde O_i^{(T_c)}\), without materializing the full probability matrix \(P\). Materializing \(P\) for quantization would reintroduce quadratic intermediate storage and memory traffic. A FlashAttention-compatible low-bit implementation must instead integrate quantization into the tiled online-softmax loop.

\paragraph{MXFP4 Block Quantization.}
MXFP4 is a microscaling format defined by the OCP MX specification~\cite{ocp2023microscaling,rouhani2023microscaling}. Each block contains \(B=32\) E2M1 elements that share an E8M0 power-of-two scale \(\sigma=2^e\). The E2M1 grid is
\begin{equation}
\mathcal{G}_{\mathrm{E2M1}}=\{0,\pm0.5,\pm1,\pm1.5,\pm2,\pm3,\pm4,\pm6\}.
\end{equation}
For an element \(v\), we write MXFP4 quantization as
\begin{equation}
\mathcal{Q}_{\sigma}(v)=\sigma\,\Pi_{\mathcal{G}_{\mathrm{E2M1}}}\!\left(\frac{v}{\sigma}\right),
\label{eq:mxfp4_quantizer}
\end{equation}
where \(\Pi_{\mathcal{G}_{\mathrm{E2M1}}}\) denotes round-to-nearest projection onto the finite E2M1 grid, with out-of-range values saturated at \(\pm6\).

Let \(M=\max_{v\in\mathcal{B}}|v|\) be the maximum magnitude of a nonzero block \(\mathcal{B}\). The standard MX conversion rule computes
\begin{equation}
e_{\mathrm{OCP}}=\left\lfloor\log_2 M\right\rfloor-e_{\max}^{\mathrm{elem}}, \qquad \sigma_{\mathrm{OCP}}=2^{e_{\mathrm{OCP}}},
\label{eq:ocp_scale}
\end{equation}
where \(e_{\max}^{\mathrm{elem}}\) is the exponent of the largest power-of-two value in the element format~\cite{rouhani2023microscaling}. For E2M1, \(e_{\max}^{\mathrm{elem}}=2\), corresponding to the grid value \(4\).

We express different scale-selection rules through a common boundary \(Q_{\max}\):
\begin{align}
e_{\mathrm{floor}}(Q_{\max}) &= \left\lfloor\log_2\left(\frac{M}{Q_{\max}}\right)\right\rfloor+1, &
\sigma_{\mathrm{floor}}(Q_{\max}) &= 2^{e_{\mathrm{floor}}(Q_{\max})}, \\
e_{\mathrm{ceil}}(Q_{\max}) &= \left\lceil\log_2\left(\frac{M}{Q_{\max}}\right)\right\rceil, &
\sigma_{\mathrm{ceil}}(Q_{\max}) &= 2^{e_{\mathrm{ceil}}(Q_{\max})}.
\label{eq:scale_definition}
\end{align}
The exact OCP rule used in our experiments is \(e_{\mathrm{floor}}(8)\), which maps the normalized block maximum into \([4,8)\). Because \(8\) is not representable in E2M1, normalized block maxima above \(7\) fall into an overflow-rounding region and are saturated to \(6\).

TetraJet's Truncation-Free Scaling (TFS) corresponds to \(e_{\mathrm{ceil}}(6)\), giving
\begin{equation}
3<\frac{M}{\sigma_{\mathrm{ceil}}(6)}\le6,
\end{equation}
which avoids overflow of the block maximum but can increase rounding and underflow errors for smaller values~\cite{tetrajet}. The floor- and ceiling-based forms agree except when \(M/Q_{\max}\) is an exact power of two; this measure-zero endpoint distinction does not affect the continuous analysis in Section~\ref{subsec:uos}.

Our UOS method adopts the ceiling-based scaling rule and derives a fixed boundary \(Q_{\max}=7.25\) by minimizing a global MXFP4 quantization-error objective that captures the trade-off between clipping large values and preserving smaller values. It requires neither calibration nor per-layer search. For an all-zero block, we set \(\sigma=1\) and quantize every element to zero.

\section{Failure Modes of Standard MXFP4 Attention}
\label{sec:challenges}

Directly applying standard MXFP4 quantization to the attention pipeline can substantially degrade generated video quality. We trace this degradation to two distinct numerical failure modes: the clipping--underflow trade-off induced by block scaling and the normalization mismatch introduced in the online-softmax loop.

\paragraph{Clipping--Underflow Trade-off.}

Under the standard OCP conversion rule, the normalized block maximum
\(x_{\max}=M/\sigma_{\mathrm{OCP}}\) lies in the interval \([4,8)\)
~\cite{ocp2023microscaling,rouhani2023microscaling}. However, the largest finite E2M1 value is \(6\). Under round-to-nearest conversion followed by finite-range saturation, normalized maxima in \((7,8)\) fall into the overflow-rounding region: their unconstrained rounded value is \(8\), which is not representable in E2M1, and they are therefore saturated to \(6\).

Under the uniform wrapped-phase approximation discussed in Appendix~\ref{app:phase_uniformity}, \(x_{\max}\) has density

\[
p(x)=\frac{1}{x\ln 2}
\]

over \([4,8)\). The probability that the block maximum enters this region is therefore

\begin{equation}
\Pr(x_{\max}\ge 7)
=
\int_{7}^{8}\frac{1}{x\ln 2}\,dx
=
\log_2\frac{8}{7}
=
1-\log_2\frac{7}{4}
\approx 19.27\%.
\label{eq:ocp_clipping_probability}
\end{equation}

Truncation-Free Scaling (TFS) avoids this overflow by using
\(Q_{\max}=6\)~\cite{tetrajet}. When the OCP-normalized maximum exceeds \(6\), TFS doubles the shared scale and consequently halves the normalized magnitude of every value in the block.

Although this prevents saturation of the largest values, it reduces the effective resolution available to smaller values and increases their probability of rounding to zero. OCP scaling and TFS therefore represent opposite ends of the same trade-off: the former better preserves small values but permits overflow saturation, whereas the latter removes saturation at the cost of increased rounding and underflow.

\paragraph{Normalization Mismatch in Online Softmax.}
FlashAttention computes attention through a tiled online-softmax recurrence without materializing the full probability matrix~\cite{dao2022flashattention,dao2023flashattention2,shah2024flashattention3}. In the direct MXFP4 baseline considered in this work, the unnormalized exponential tile is quantized for the output-accumulator update, while its unquantized counterpart is used for the row-wise sum update. Let
\(\widehat P_{ij}=\mathcal{Q}_{\star}(\widetilde P_{ij})\)
denote the quantized exponential tile. The two update paths become
\begin{align}
\ell_i^{(j)}
&=
\alpha_i^{(j)}\odot \ell_i^{(j-1)}
+
\widetilde P_{ij}\mathbf{1}_{B_c},
\\
\widehat{\widetilde O}_i^{(j)}
&=
\operatorname{Diag}\!\left(\alpha_i^{(j)}\right)
\widehat{\widetilde O}_i^{(j-1)}
+
\widehat P_{ij}\widehat V_j .
\label{eq:naive_mxfp4_online_softmax}
\end{align}
The row-wise sum and output accumulator are therefore updated from different representations of the same softmax exponentials. Let \(\widetilde P_i^{\mathrm{eff}}\) and \(\widehat P_i^{\mathrm{eff}}\) denote the corresponding effective exponential rows after accounting for online rescaling factors. The induced attention weights satisfy
\begin{equation}
\widehat P_i^{\mathrm{direct}}
=
\operatorname{Diag}\!\left(
\widetilde P_i^{\mathrm{eff}}\mathbf{1}
\right)^{-1}
\widehat P_i^{\mathrm{eff}},
\qquad
\widehat P_i^{\mathrm{direct}}\mathbf{1}
=
\left(
\widehat P_i^{\mathrm{eff}}\mathbf{1}
\right)
\oslash
\left(
\widetilde P_i^{\mathrm{eff}}\mathbf{1}
\right)
\neq1 .
\label{eq:naive_rowsum}
\end{equation}
This mismatch is not an unavoidable consequence of element-wise quantization; it arises because the row-wise sum and output-accumulator updates use inconsistent representations of the same softmax exponentials. Zeroing, saturation, and rounding all perturb the quantized exponential mass. As quantified in Section~\ref{subsubsec:pnq_validation}, the induced row sums of the direct online MXFP4 baseline on Wan2.2 have an overall mean of \(0.9336\) across five denoising steps, two prompts, and 40 attention layers. All ten step--prompt groups have mean row sums below one, although individual rows range from \(0.7181\) to \(1.1127\). The resulting row-dependent scaling error perturbs the magnitude of the attention output and propagates through subsequent Transformer blocks and denoising steps.

\section{MXAttention}
\label{sec:mxattention}

This section presents \textbf{MXAttention}, which addresses MXFP4 quantization errors at two complementary stages of the attention pipeline. First, \textbf{Universal Optimal Scaling (UOS)} derives the closed-form, data-free optimal scaling boundary \(Q_{\max}=7.25\) by minimizing a global MXFP4 quantization-error objective. Second, \textbf{Pre-Normalization Quantization (PNQ)} quantizes the unnormalized softmax exponentials before both normalizer accumulation and output-accumulator updates, ensuring that the induced attention probabilities remain row-normalized by construction. Together, UOS and PNQ reduce block-level quantization error and eliminate the normalization mismatch introduced by direct MXFP4 quantization in online softmax.

\subsection{Universal Optimal Scaling}
\label{subsec:uos}

Under the ceiling-based scaling rule introduced in the Preliminaries, let \(q=Q_{\max}\) and let \(M\) be the maximum magnitude of a nonzero block. The shared-scale exponent and normalized block maximum are
\begin{equation}
e_q(M)=\left\lceil\log_2\left(\frac{M}{q}\right)\right\rceil,
\qquad
X_q=\frac{M}{2^{e_q(M)}}\in\left(\frac{q}{2},q\right].
\label{eq:uos_mapping}
\end{equation}
We analyze nonzero blocks and assume that \(e_q(M)\) lies within the E8M0 exponent range. The cases \(M=0\), for which \(\log_2M\) is undefined, and exponents outside this range are handled by the MXFP4 conversion routine and do not affect the derivation of \(q\). For a fixed block maximum \(M\), increasing \(q\) can select a smaller shared scale. This increases all normalized magnitudes, reducing rounding and underflow for small values while moving the block maximum closer to the upper end of the E2M1 range. Decreasing \(q\) has the opposite effect. UOS formulates this trade-off as an optimization over \(q\).

\paragraph{Data-Free Quantization-Error Objective.}
Because the E2M1 grid is symmetric, it is sufficient to analyze nonnegative magnitudes. Let \(\Pi_{\mathcal G_+}\) denote round-to-nearest projection onto the nonnegative finite E2M1 grid
\begin{equation}
\mathcal G_+=\{0,0.5,1,1.5,2,3,4,6\},
\end{equation}
with saturation at \(6\)~\cite{ocp2023microscaling,rouhani2023microscaling}. We define the cumulative projection error and its normalized form as
\begin{equation}
E(x)=\int_0^x\left(v-\Pi_{\mathcal G_+}(v)\right)^2\,dv,
\qquad
D(x)=\frac{E(x)}{x^3}.
\label{eq:uos_local_distortion}
\end{equation}
Equivalently,
\begin{equation}
D(x)=\frac{1}{x^2}\left[\frac{1}{x}\int_0^x\left(v-\Pi_{\mathcal G_+}(v)\right)^2\,dv\right].
\end{equation}
Thus, \(D(x)\) is the mean squared projection error over \([0,x]\), divided by \(x^2\). The factor \(1/x^2\) removes the overall magnitude scale, making \(D(x)\) a dimensionless relative-error measure. It is a data-free analytical measure determined by the E2M1 grid rather than the empirical MSE of a particular block.

Let \(g_q(x)\) denote the density of \(X_q\). We define the global MXFP4 quantization-error objective as
\begin{equation}
\mathcal J(q)=\mathbb E[D(X_q)]
=\int_{q/2}^{q}D(x)g_q(x)\,dx.
\label{eq:uos_global_objective}
\end{equation}

\paragraph{Log-Periodic Boundary Relation.}
A direct optimization of Eq.~\eqref{eq:uos_global_objective} appears to require knowledge of the block-maximum distribution, which can vary across tensors, layers, and models. We next show that power-of-two scaling imposes an exact log-periodic relation that removes this distribution dependence from the derivative sign. Recall that \(M\) is the block maximum, and let \(U=\log_2M\) have an arbitrary absolutely continuous density \(f_U\). Appendix~\ref{app:uos_density} shows that, within the support of \(X_q\),
\begin{equation}
g_q(x)=g(x)=\frac{H(\log_2x)}{x\ln2},
\qquad
H(t)=\sum_{k\in\mathbb Z}f_U(t+k),
\label{eq:uos_wrapped_density}
\end{equation}
Here, \(H\) is the periodized density of \(U\), obtained by summing the integer shifts of \(f_U\)~\cite{mardia2000directional,bell2024wrapped}. It has period one, meaning \(H(t+1)=H(t)\). The parameter \(q\) selects the interval \((q/2,q]\), but does not otherwise change the density within that interval.

The periodized density has unit mean over one period. Under standard Fourier-convergence conditions~\cite{stein2003fourier}, it can be written as
\begin{equation}
H(t)=1+\sum_{n\ne0}c_ne^{i2\pi nt},
\label{eq:uos_harmonic_decomposition}
\end{equation}
where the constant term corresponds to a uniform wrapped phase, or a log-uniform density in the linear domain, and the nonzero modes describe distribution-specific harmonics. These harmonics are not assumed to be small.

The endpoints \(q/2\) and \(q\) differ by exactly one power-of-two scale interval. After taking \(\log_2\), they are one unit apart, exactly the period of \(H\):
\begin{equation}
\log_2(q/2)=\log_2q-1.
\end{equation}
Consequently, every harmonic takes the same value at the two endpoints:
\begin{equation}
e^{i2\pi n\log_2(q/2)}
=e^{i2\pi n(\log_2q-1)}
=e^{i2\pi n\log_2q},
\qquad n\in\mathbb Z.
\label{eq:uos_harmonic_boundary_matching}
\end{equation}

\begin{lemma}[Boundary Density Ratio]
\label{lemma:periodic_boundary_matching}
For almost every \(q>0\), the density of the normalized block maximum satisfies
\begin{equation}
g\left(\frac q2\right)=2g(q).
\label{eq:uos_boundary_density}
\end{equation}
\end{lemma}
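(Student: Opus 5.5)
The plan is to read the identity directly off the wrapped-density representation in Eq.~\eqref{eq:uos_wrapped_density}, using only the period-one property of \(H\). We take as given the result of Appendix~\ref{app:uos_density}: on the support \((q/2,q]\) of \(X_q\), the density is \(g(x)=H(\log_2x)/(x\ln2)\) with \(H(t)=\sum_{k\in\mathbb Z}f_U(t+k)\). Substituting \(x=q/2\) and \(x=q\) formally gives
\begin{equation*}
g\!\left(\tfrac q2\right)=\frac{H(\log_2q-1)}{(q/2)\ln2}=\frac{2\,H(\log_2 q)}{q\ln2}=2g(q),
\end{equation*}
where the middle equality uses \(H(t-1)=H(t)\). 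Periodicity follows by reindexing the sum \(k\mapsto k-1\), which is legitimate because all terms are nonnegative (Tonelli). So the whole content is that the log-uniform Jacobian \(1/(x\ln2)\) contributes the factor \(2\), while the periodized factor \(H\) matches at the two endpoints, exactly as in Eq.~\eqref{eq:uos_harmonic_boundary_matching}. No Fourier expansion is needed; the harmonic picture is only an illustration.

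The step I expect to need care is making "evaluate the density at the endpoints" meaningful, which is why the statement says "almost every \(q\)". First, \(f_U\) is only an \(L^1\) density, so \(H\) is defined only almost everywhere. \(H\) is locally integrable, since \(\int_0^1H=\int_{\mathbb R}f_U=1\), and it is finite a.e. Second, \(q/2\) lies outside the half-open support \((q/2,q]\), so \(g(q/2)\) must be read as the value of the canonical density \(x\mapsto H(\log_2x)/(x\ln2)\), i.e., its extension to the closed interval; equivalently, as the right limit if \(H\) is continuous there. I will fix one Borel representative of \(f_U\) and define \(H\) pointwise as the (possibly infinite) series. This \(H\) is then exactly \(1\)-periodic everywhere, not merely a.e., because the reindexing argument is pointwise.

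The exceptional set is then \(\{q:H(\log_2q)=\infty\}\), which has Lebesgue measure zero. This is because \(H\) is finite a.e. and \(q\mapsto\log_2q\) maps null sets to null sets. Off this set both sides are finite and the displayed computation holds. If one prefers a representative-free statement, the same argument applies to Lebesgue points of \(H\), which are also full-measure and invariant under the shift \(t\mapsto t-1\). This gives the identity as an equality of Lebesgue-point values.

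Finally, I will remark that the identity holds for every \(q\) whenever \(f_U\) is continuous and the periodized series converges locally uniformly, since then \(H\) is continuous and periodic. This is the form actually used later, when the lemma enters the derivative \(\mathcal J'(q)\) via the Leibniz rule, whose boundary terms are \(D(q)g(q)-\tfrac12D(q/2)g(q/2)\).
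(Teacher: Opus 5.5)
Your proposal is correct and follows the paper's own argument: evaluate \(g(x)=H(\log_2x)/(x\ln2)\) at both endpoints, use the period-one identity \(H(\log_2q-1)=H(\log_2q)\), and let the Jacobian factor \(1/x\) supply the factor \(2\). Your measure-theoretic additions (a fixed Borel representative that makes \(H\) periodic everywhere, and the null exceptional set \(\{q: H(\log_2q)=\infty\}\)) make explicit what the paper compresses into ``for almost every \(q\)''; note that the a.e.\ form is already all the later derivative computation needs, because Appendix~\ref{app:differentiation} works through the absolute continuity of \(\mathcal J\) rather than the every-\(q\) version.
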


\begin{proof}
For almost every \(q\), periodicity gives
\begin{equation}
H\left(\log_2\frac q2\right)=H(\log_2q-1)=H(\log_2q).
\end{equation}
Substituting this identity into Eq.~\eqref{eq:uos_wrapped_density} gives
\begin{equation}
g\left(\frac q2\right)=\frac{H(\log_2q)}{(q/2)\ln2}=2g(q).
\end{equation}
\end{proof}

Since \(D(x)g(x)\) is locally integrable, \(\mathcal J\) is absolutely continuous; Appendix~\ref{app:differentiation} justifies the differentiation below. Differentiating Eq.~\eqref{eq:uos_global_objective} with respect to \(q\) using the Leibniz integral rule and applying Lemma~\ref{lemma:periodic_boundary_matching} yields, for almost every \(q\),
\begin{align}
\mathcal J'(q)
&=D(q)g(q)-\frac12D\left(\frac q2\right)g\left(\frac q2\right)\\
&=g(q)\left[D(q)-D\left(\frac q2\right)\right].
\label{eq:uos_objective_derivative}
\end{align}
All dependence on the block-maximum distribution is contained in the nonnegative factor \(g(q)\). This factor can change the magnitude of the derivative and the shape of the objective, but it cannot reverse the sign determined by \(D(q)-D(q/2)\). The location of the optimum is therefore determined by the E2M1 grid rather than by the block-maximum distribution.

Wherever \(g(q)>0\), the condition \(\mathcal J'(q)=0\) requires
\begin{equation}
D(q)=D\left(\frac q2\right),
\label{eq:uos_boundary_condition}
\end{equation}
or equivalently,
\begin{equation}
E(q)=8E\left(\frac q2\right).
\label{eq:uos_absolute_balance}
\end{equation}
Appendix~\ref{app:phase_uniformity} provides the complete Fourier interpretation and additional wrapped-distribution analysis~\cite{mardia2000directional,bell2024wrapped}. Neither a Gaussian nor a log-uniform prior is required for the UOS result.

\paragraph{Closed-Form E2M1 Solution.}
Equation~\eqref{eq:uos_objective_derivative} reduces the optimization to the grid-dependent difference \(D(q)-D(q/2)\). We now evaluate this difference exactly for E2M1. We first evaluate it over \(q\in[6,8]\). The endpoints recover TFS at \(q=6\)~\cite{tetrajet} and the ceiling-based counterpart of OCP at \(q=8\), which explains the choice of interval. Define
\begin{equation}
\Delta(q)=E(q)-8E\left(\frac q2\right).
\end{equation}
Since
\begin{equation}
D(q)-D\left(\frac q2\right)=\frac{\Delta(q)}{q^3},
\label{eq:uos_delta_relation}
\end{equation}
the sign of \(\Delta(q)\) determines the direction of \(\mathcal J(q)\). Exact integration over the E2M1 decision intervals gives
\begin{equation}
\Delta(q)=
\begin{cases}
-\dfrac18, & 6\le q\le7,\\[6pt]
\dfrac{(4q-29)(4q-27)}8, & 7\le q\le8.
\end{cases}
\label{eq:uos_piecewise_delta}
\end{equation}
The first branch is negative. In the second branch, the factor \(4q-27\) vanishes at \(q=27/4=6.75\), outside its valid interval \([7,8]\). The only admissible sign change is therefore
\begin{equation}
q^\star=\frac{29}{4}=7.25.
\end{equation}
Within \([6,8]\), the objective is nonincreasing before \(q^\star\) and nondecreasing afterward; the inequalities are strict wherever \(g(q)>0\). Appendix~\ref{app:uos_global_optimality} extends this sign analysis to the complete domain \(q>0\).

\begin{theorem}[Global Optimality of UOS]
\label{thm:uos_optimality}
Consider nonzero blocks for which the required E8M0 shared-scale exponent is representable. Under the ceiling-based scale family in Eq.~\eqref{eq:uos_mapping}, for any absolutely continuous law of \(U=\log_2M\), the objective in Eq.~\eqref{eq:uos_global_objective} has a global minimizer
\begin{equation}
q^\star=\frac{29}{4},
\qquad
Q_{\max}^\star=q^\star=7.25.
\label{eq:uos_optimal_qmax}
\end{equation}
If the periodized density \(H\) is positive almost everywhere over one period, the minimizer is unique.
\end{theorem}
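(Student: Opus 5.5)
The plan is to turn the derivative identity Eq.~\eqref{eq:uos_objective_derivative}, combined with Eq.~\eqref{eq:uos_delta_relation}, into a global comparison. Appendix~\ref{app:differentiation} shows that $\mathcal J$ is absolutely continuous, and for almost every $q$
\[
\mathcal J'(q)=\frac{g(q)\,\Delta(q)}{q^3}.
\]
Hence, for every $q>0$,
\[
\mathcal J(q)-\mathcal J(q^\star)=\int_{q^\star}^{q}\frac{g(x)\Delta(x)}{x^3}\,dx .
\]
Because $g\ge 0$ for every admissible law of $U$, it suffices to establish the \emph{distribution-free} sign pattern $\Delta\le 0$ on $(0,q^\star)$ and $\Delta\ge 0$ on $(q^\star,\infty)$. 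Then the integrand is nonpositive to the left of $q^\star$, so the integral from $q^\star$ down to $q$ is nonnegative. To the right of $q^\star$ the integrand is nonnegative. In both cases $\mathcal J(q)\ge\mathcal J(q^\star)$, which gives global minimality for any absolutely continuous law.

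The core work is therefore the sign analysis of $\Delta(q)=E(q)-8E(q/2)$ on all of $(0,\infty)$. Since $\Pi_{\mathcal G_+}$ is piecewise constant with decision points $0.25,0.75,1.25,1.75,2.5,3.5,5$, the squared error $(v-\Pi_{\mathcal G_+}(v))^2$ is a piecewise quadratic. Saturation makes it equal to $(v-6)^2$ for $v\ge5$. Consequently $E$ is an explicit piecewise cubic, and $\Delta$ is piecewise cubic with breakpoints at these points and at their doubles. I would split the domain into three regions.

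\textbf{(i) $0<q\le 12$.} Here I tabulate $\Delta$ on each of the finitely many subintervals by exact integration, as was done for Eq.~\eqref{eq:uos_piecewise_delta}, and check signs. For $q\le 0.25$ we have $E(x)=x^3/3$, so $\Delta\equiv 0$. On the subintervals from $0.25$ up to $6$, I expect $\Delta\le 0$. This part is plausible because the grid $\{0.5,1,1.5,2,3,4,6\}$ is close to self-similar under doubling, so $\Delta$ should stay small and negative. On $[6,8]$, Eq.~\eqref{eq:uos_piecewise_delta} already gives the sign change at $29/4$. On $[8,12]$, I expect $\Delta>0$.

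\textbf{(ii) $q\ge 12$.} Both $q$ and $q/2$ lie in the saturation region, so
\[
E(x)=E(6)+\frac{(x-6)^3}{3}\quad (x\ge 6),
\]
and therefore
\[
\Delta(q)=-7E(6)+\frac{(q-6)^3-(q-12)^3}{3}.
\]
The bracketed difference is increasing in $q$ for $q\ge12$, since its derivative is $(q-6)^2-(q-12)^2>0$. So $\Delta$ is increasing there, and it suffices to check $\Delta(12)>0$, which follows from continuity with region (i).

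\textbf{Uniqueness.} Assume $H>0$ almost everywhere on a period; then $g>0$ almost everywhere on $(0,\infty)$. Take any $q\neq q^\star$. The interval between $q$ and $q^\star$ contains a one-sided neighborhood of $q^\star$. On $(7,29/4)$ we have $\Delta=-1/8$ or $\Delta=(4q-29)(4q-27)/8<0$, and on $(29/4,8)$ we have $\Delta>0$. On that neighborhood $g\Delta/x^3$ therefore has strict sign almost everywhere, so the integral is strictly positive and $\mathcal J(q)>\mathcal J(q^\star)$. The flat region $q\le 0.25$, where $\Delta\equiv0$, does not break uniqueness, because every path from such a $q$ to $q^\star$ still passes through this strict-sign neighborhood.

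I expect the main obstacle to be the exhaustive sign check of $\Delta$ on $(0.25,6)$ and on $(8,12)$. These regions involve many polynomial pieces, and on some of them $\Delta$ might turn out to be identically zero rather than strictly signed. That would be harmless for the argument, since only $\Delta\le0$ left of $q^\star$ and $\Delta\ge0$ right of it are needed, but a genuine sign violation anywhere would break global optimality. I would first attempt a scaling argument to reduce the low range to a few representative intervals. The idea is that for $x\le 3.5$ the non-saturated grid is exactly self-similar under halving except near $0$ and near the step $2\to3$, which should relate $\Delta$ on $[q,2q]$ to $\Delta$ on $[q/2,q]$ plus explicit corrections. If that fails, I would fall back on direct interval-by-interval evaluation of the cubics.
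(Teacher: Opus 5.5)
Your plan is correct and is essentially the paper's proof. The paper also uses $\mathcal J'(q)=g(q)\Delta(q)/q^3$ from Appendix~\ref{app:differentiation} and then tabulates $\Delta$ exactly on every piece of $(0,\infty)$ (Table~\ref{tab:global_delta}), obtaining exactly the sign pattern you anticipate: $\Delta\equiv0$ on $(0,\tfrac14]$, $\Delta<0$ on $(\tfrac14,\tfrac{29}{4})$, and $\Delta>0$ afterward. Your saturation formula $-7E(6)+\tfrac13\left[(q-6)^3-(q-12)^3\right]=\tfrac{48q^2-864q+3983}{8}$ already applies for $q\ge10$, because $E(x)=E(6)+(x-6)^3/3$ holds for all $x\ge5$; on $(7,\tfrac{29}{4})$ only the quadratic branch applies, not $-\tfrac18$.
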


Figure~\ref{fig:uos_analysis} plots the UOS objective under a log-uniform density and several nonuniform wrapped densities. The curves have different shapes but share the same minimum at \(q=7.25\), illustrating that distribution-specific harmonics affect the objective without shifting its grid-determined optimum.

\begin{figure*}[!t]
\centering
\begin{minipage}[t]{0.48\textwidth}
\centering
\includegraphics[width=\linewidth]{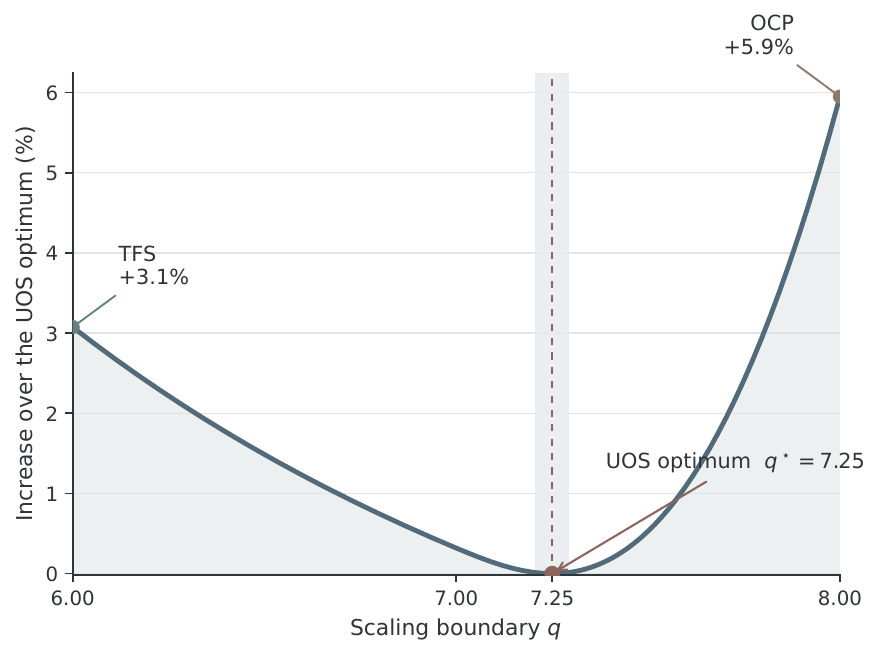}
\vspace{-0.35em}

{\small\textbf{(a) Objective under a log-uniform density}\par}
\end{minipage}
\hfill
\begin{minipage}[t]{0.48\textwidth}
\centering
\includegraphics[width=\linewidth]{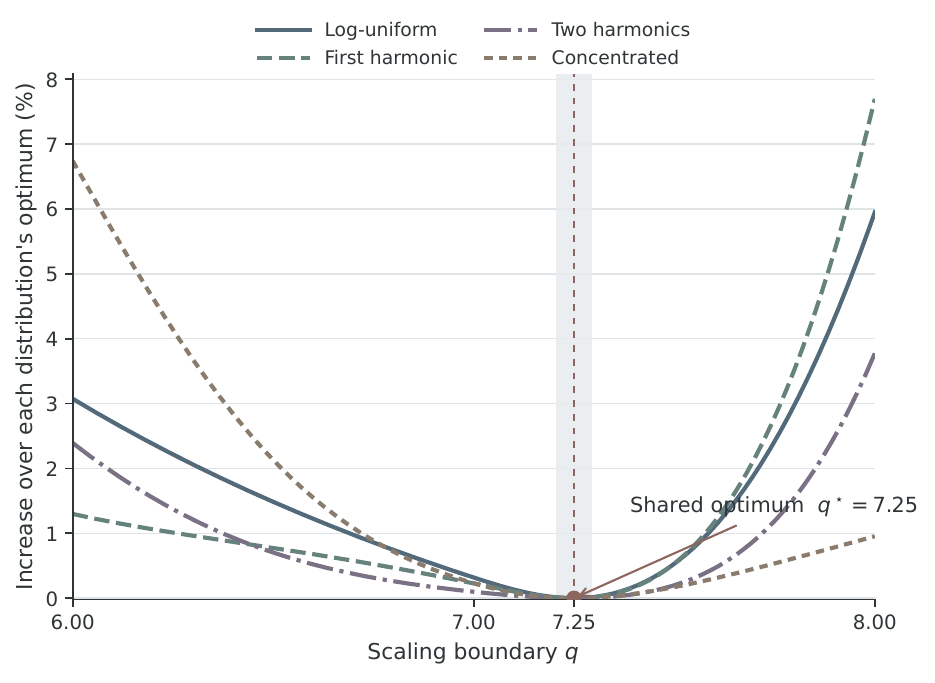}
\vspace{-0.35em}

{\small\textbf{(b) Objectives under wrapped densities}\par}
\end{minipage}
\caption{\textbf{UOS objective under different wrapped distributions.} (a) Percentage increase in the objective relative to its value at \(q=7.25\) under a log-uniform density. (b) The same quantity under representative nonuniform wrapped densities. Distribution-specific harmonics change the curve shapes, but all cases share the minimum \(q=7.25\).}
\label{fig:uos_analysis}
\end{figure*}

\paragraph{UOS Quantization.}
Algorithm~\ref{alg:uos_quantization} summarizes the resulting MXFP4 quantization procedure. UOS modifies only the shared-exponent selection rule, while block formation and element-wise E2M1 quantization remain unchanged.

\begin{algorithm}[t]
\caption{UOS-Based MXFP4 Block Quantization}
\label{alg:uos_quantization}
\begin{algorithmic}[1]
\Statex \textbf{Input:} A block
\(\mathcal B=\{v_i\}_{i=1}^{B}\), \(B=32\);
UOS boundary \(q^\star=7.25\)
\Statex \textbf{Output:} E8M0 shared scale \(\sigma\);
E2M1 elements \(\{p_i\}_{i=1}^{B}\)

\State \(M\gets\max_{1\le i\le B}|v_i|\)

\If{\(\mathcal B\text{ is an all-zero block}\)}
    \State \(\sigma\gets1\)
    \State \(p_i\gets0,\quad i=1,\ldots,B\)
    \State \textbf{return} \(\sigma,\{p_i\}_{i=1}^{B}\)
\EndIf

\State \(e\gets
\left\lceil
\log_2\!\left(M/q^\star\right)
\right\rceil\)

\State \(\sigma\gets2^e\)

\For{\(i=1\) to \(B\)}
    \State \(p_i\gets
    \Pi_{\mathcal G_{\mathrm{E2M1}}}
    \!\left(v_i/\sigma\right)\)
\EndFor

\State \textbf{return} \(\sigma,\{p_i\}_{i=1}^{B}\)
\end{algorithmic}
\end{algorithm}

For nonzero blocks, the normalized maximum after UOS scaling satisfies
\[
\frac{q^\star}{2}<\frac{M}{\sigma}\le q^\star,
\]
which yields the range \((3.625,7.25]\) for \(q^\star=7.25\).

\paragraph{Comparison with Existing Scaling Rules.}
Table~\ref{tab:uos_scaling_comparison} compares OCP, TFS, and UOS at the standard 32-element MXFP4 block granularity. The exact OCP baseline uses a floor-based conversion rule, whereas TFS and UOS belong to the ceiling-based family. The relation to OAS is given separately in Appendix~\ref{app:oas_relation}.

\begin{table}[!t]
\centering
\caption{\textbf{Comparison of MXFP4 scaling rules at \(B=32\).} All normalized maxima above \(6\) map to the largest finite E2M1 value. The reported overflow-rounding interval is the stricter subset in which the nearest value on an unbounded E2M1 ladder would be \(8\) before finite-range saturation. The endpoint at \(7\) depends on the tie-breaking convention.}
\label{tab:uos_scaling_comparison}
\small
\resizebox{\columnwidth}{!}{
\begin{tabular}{lccc}
\toprule
\textbf{Method} & \textbf{Shared-scale exponent} & \textbf{Normalized maximum range} & \textbf{Overflow-rounding interval}\\
\midrule
OCP & \(\lfloor\log_2M\rfloor-2\) & \([4,8)\) & \((7,8)\)\\
TFS & \(\lceil\log_2(M/6)\rceil\) & \((3,6]\) & None\\
UOS & \(\lceil\log_2(M/7.25)\rceil\) & \((3.625,7.25]\) & \((7,7.25]\)\\
\bottomrule
\end{tabular}}
\end{table}

Relative to TFS, UOS selects a smaller shared scale for a subset of blocks, improving the resolution available to smaller values while admitting only the narrow overflow-rounding interval \((7,7.25]\). Relative to OCP, UOS substantially narrows this interval. The optimum is not obtained by separately equating a clipping term and an underflow term. Instead, Eq.~\eqref{eq:uos_boundary_condition} balances the total normalized projection errors at the two endpoints associated with adjacent power-of-two scale choices.

\begin{figure}[!t]
\centering
\includegraphics[width=\columnwidth]{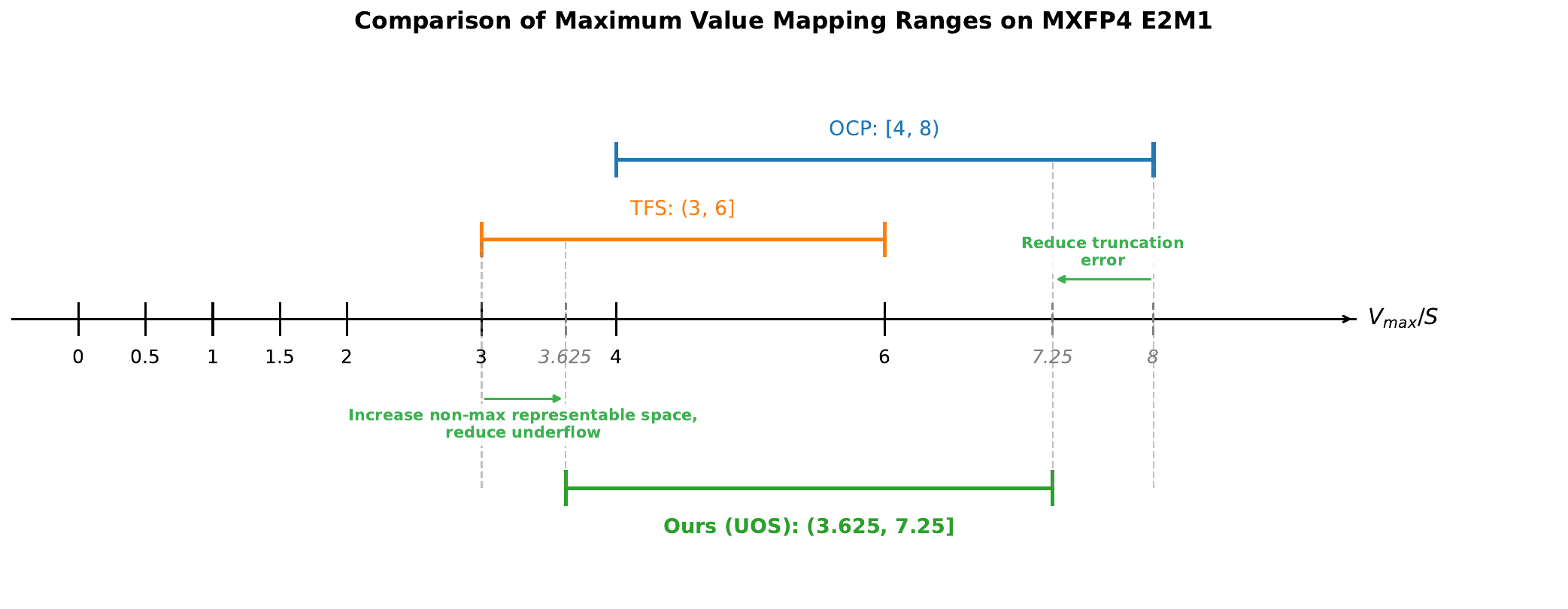}
\caption{\textbf{Normalized block-maximum ranges under representative MXFP4 scaling rules.} The exact OCP floor rule maps block maxima to \([4,8)\), with normalized maxima above \(7\) entering the overflow-rounding region. TFS uses \((3,6]\) to avoid overflow of the block maximum, while UOS selects the intermediate range \((3.625,7.25]\).}
\label{fig:mapping_ranges}
\end{figure}

\FloatBarrier

\subsection{Pre-Normalization Quantization}
\label{subsec:pnq}

Softmax normalizes each attention row:
\begin{equation}
\sum_j P_{ij}=1.
\label{eq:softmax_rowsum}
\end{equation}
Because MXFP4 quantizes microscaling blocks independently, it does not by itself preserve this row-wise sum. Consequently, where quantization is inserted in the online-softmax loop determines whether the induced attention weights remain normalized. Throughout this section, \(\mathcal{Q}_{\star}(\cdot)\) denotes block-wise MXFP4 quantization using the UOS boundary \(q^\star=7.25\). To isolate the effect of quantization placement, both the direct formulation and PNQ use the same quantizer \(\mathcal Q_\star\).

\begin{figure*}[!t]
\centering
\includegraphics[width=\textwidth]{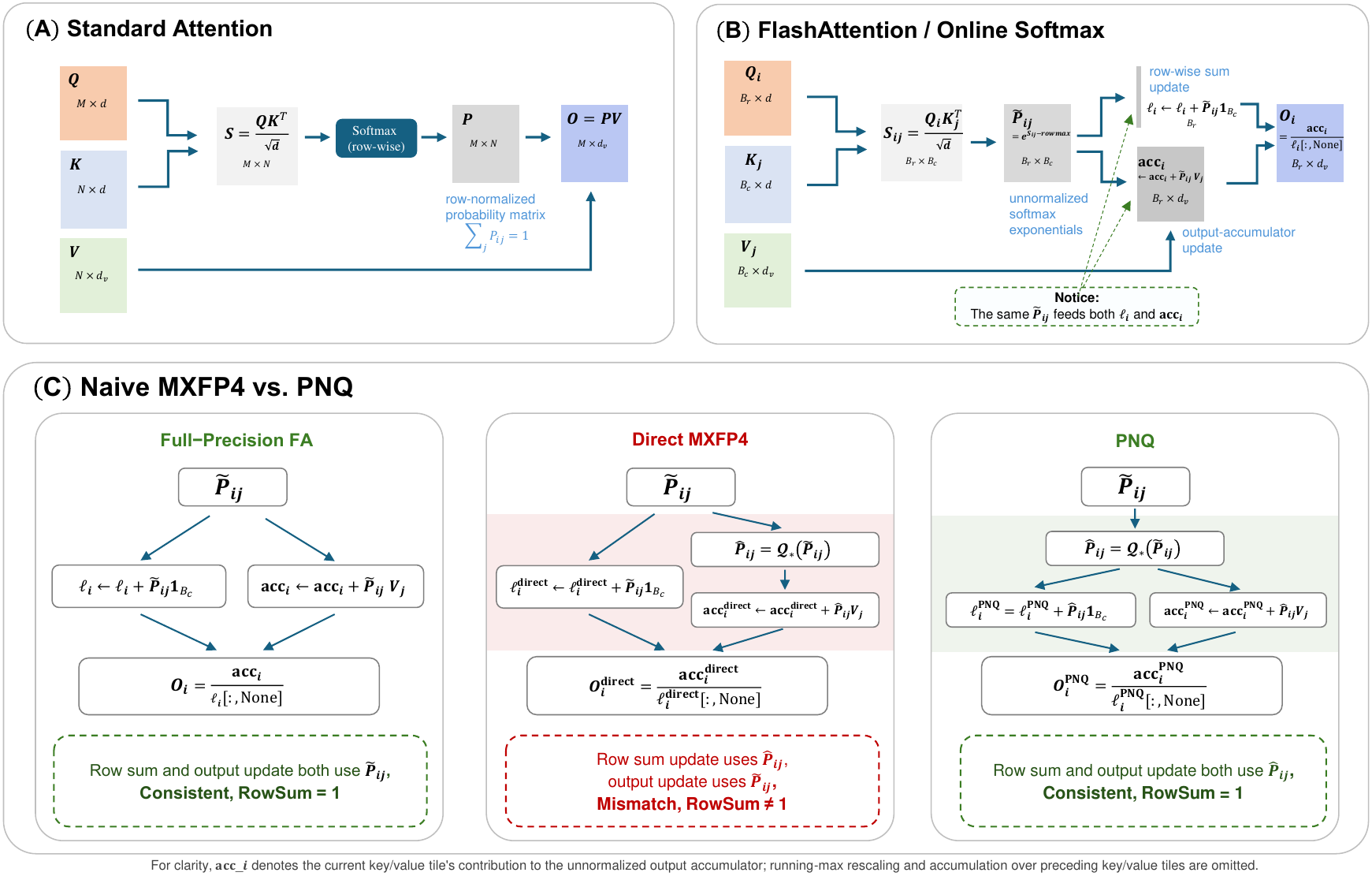}
\caption{\textbf{Pre-Normalization Quantization (PNQ).} (A) Standard attention forms the normalized probability matrix \(P\) before computing \(O=PV\). (B) FlashAttention avoids materializing \(P\) and updates both the row-wise sum of exponentials \(\ell_i\) and the unnormalized output accumulator \(\widetilde O_i\) from the same exponential tile \(\widetilde P_{ij}\). (C) A direct placement uses the unquantized tile \(\widetilde P_{ij}\) in the \(\ell_i\) update but the quantized tile \(\widehat P_{ij}\) in the \(\widetilde O_i\) update. PNQ uses the same \(\widehat P_{ij}\) in both updates. Running-max rescaling and previous key/value tiles are omitted for clarity.}
\label{fig:pnq_illustration}
\end{figure*}

Using the notation introduced in Section~\ref{sec:preliminaries}, consider a single tile before accounting for previous accumulators and running-max rescaling. The tile adds
\begin{equation}
\delta\ell_i=\widetilde P_{ij}\mathbf1_{B_c},
\qquad
\delta\widetilde O_i=\widetilde P_{ij}V_j
\label{eq:full_precision_tile_contribution}
\end{equation}
to the row-wise sum of exponentials and the unnormalized output accumulator. Both updates use the same exponential tile, as illustrated in Figure~\ref{fig:pnq_illustration}(B). For clarity, we omit the quantization mark on \(V_j\); its treatment is identical in the direct and PNQ formulations and does not affect the row-normalization analysis.

\paragraph{Normalization Mismatch under Direct Softmax-Path Quantization.}
A direct placement uses the unquantized exponential tile to update \(\ell_i\), but quantizes the same tile for the output-accumulator update:
\begin{equation}
\widehat P_{ij}=\mathcal Q_\star(\widetilde P_{ij}),
\qquad
\delta\ell_i^{\mathrm{direct}}=\widetilde P_{ij}\mathbf1_{B_c},
\qquad
\delta\widetilde O_i^{\mathrm{direct}}=\widehat P_{ij}V_j.
\label{eq:direct_mxfp4_tile}
\end{equation}
Here, \(\widehat P_{ij}\) denotes a quantized unnormalized exponential tile, not a normalized probability tile. The mismatch is already visible at the tile level because
\begin{equation}
\widehat P_{ij}\mathbf1_{B_c}
\neq
\widetilde P_{ij}\mathbf1_{B_c}
\label{eq:direct_tile_mass_mismatch}
\end{equation}
in general. The final row-wise sum is therefore computed from different weights than those used in the output update. Appendix~\ref{app:pnq_direct_full_row} gives the corresponding full-row expression after all online rescaling factors are included.

\paragraph{Pre-Normalization Quantization.}
PNQ quantizes the exponential tile before either update:
\begin{equation}
\widehat P_{ij}=\mathcal Q_\star(\widetilde P_{ij}),
\end{equation}
\begin{equation}
\delta\ell_i^{\mathrm{PNQ}}
=
\widehat P_{ij}\mathbf1_{B_c},
\qquad
\delta\widetilde O_i^{\mathrm{PNQ}}
=
\widehat P_{ij}V_j.
\label{eq:pnq_tile_contribution}
\end{equation}
The row-wise sum and output accumulator are therefore updated from the same quantized exponential values, as shown in Figure~\ref{fig:pnq_illustration}(C).

In the complete online-softmax recurrence, let
\begin{equation}
\alpha_i^{(j)}
=
\exp\!\left(m_i^{(j-1)}-m_i^{(j)}\right)
\end{equation}
be the row-wise factor applied when the running maximum changes. PNQ updates
\begin{align}
\ell_i^{(j)}
&=
\alpha_i^{(j)}\odot\ell_i^{(j-1)}
+
\widehat P_{ij}\mathbf1_{B_c},
\\
\widetilde O_i^{(j)}
&=
\operatorname{Diag}\!\left(\alpha_i^{(j)}\right)\widetilde O_i^{(j-1)}
+
\widehat P_{ij}V_j.
\label{eq:pnq_online_update}
\end{align}
Both FlashAttention states use the same running-max rescaling and the same quantized exponential tile.

Appendix~\ref{app:pnq_full_row} shows that, after all key/value tiles have been processed, the two states can be written as
\begin{equation}
\ell_i=\widehat P_i^{\mathrm{eff}}\mathbf1,
\qquad
\widetilde O_i=\widehat P_i^{\mathrm{eff}}V,
\qquad
O_i=\operatorname{Diag}(\ell_i)^{-1}\widetilde O_i,
\label{eq:pnq_final_output}
\end{equation}
where \(\widehat P_i^{\mathrm{eff}}\) is a conceptual full-row matrix obtained by rescaling each quantized tile to the final row maximum and concatenating the resulting tiles. It is used only for analysis and is not materialized by the kernel. The normalized attention weights induced by PNQ are
\begin{equation}
P_i^{\mathrm{PNQ}}
=
\operatorname{Diag}\!\left(\widehat P_i^{\mathrm{eff}}\mathbf1\right)^{-1}
\widehat P_i^{\mathrm{eff}}.
\label{eq:pnq_induced_weights}
\end{equation}

\begin{proposition}[Row-Wise Normalization under PNQ]
\label{prop:pnq_normalization}
For every attention row considered in this work, the weights induced by PNQ satisfy, in exact arithmetic,
\begin{equation}
P_i^{\mathrm{PNQ}}\mathbf1=\mathbf1.
\label{eq:pnq_rowsum}
\end{equation}
\end{proposition}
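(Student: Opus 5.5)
The plan is to unroll the PNQ recurrence in Eq.~\eqref{eq:pnq_online_update} and write both final states in terms of one conceptual row $\widehat P_i^{\mathrm{eff}}$. Once that is done, Eq.~\eqref{eq:pnq_rowsum} reduces to the identity $\operatorname{Diag}(s)^{-1}s=\mathbf 1$ for a vector $s$ with nonzero entries.

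First, we induct on $j$. Define
\[
\beta_i^{(j,T_c)}=\exp\!\left(m_i^{(j)}-m_i^{(T_c)}\right)=\prod_{k=j+1}^{T_c}\alpha_i^{(k)},
\]
which telescopes because $\alpha_i^{(k)}=\exp(m_i^{(k-1)}-m_i^{(k)})$. The base case uses $\ell_i^{(0)}=0$ and $\widetilde O_i^{(0)}=0$. We then show
\[
\ell_i^{(T_c)}=\sum_{j}\operatorname{Diag}(\beta_i^{(j,T_c)})\widehat P_{ij}\mathbf 1_{B_c},
\qquad
\widetilde O_i^{(T_c)}=\sum_{j}\operatorname{Diag}(\beta_i^{(j,T_c)})\widehat P_{ij}V_j .
\]
Both states obey the same linear recurrence, with the same multiplier $\alpha_i^{(j)}$ and the same added tile $\widehat P_{ij}$, and differ only in the right factor ($\mathbf 1_{B_c}$ versus $V_j$). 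So a single induction handles both. Concatenating the rescaled tiles $\operatorname{Diag}(\beta_i^{(j,T_c)})\widehat P_{ij}$ gives $\widehat P_i^{\mathrm{eff}}$, and hence $\ell_i=\widehat P_i^{\mathrm{eff}}\mathbf 1$ and $\widetilde O_i=\widehat P_i^{\mathrm{eff}}V$, which is Eq.~\eqref{eq:pnq_final_output}.

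Next, Eq.~\eqref{eq:pnq_induced_weights} gives
\[
P_i^{\mathrm{PNQ}}\mathbf 1=\operatorname{Diag}(\widehat P_i^{\mathrm{eff}}\mathbf 1)^{-1}\widehat P_i^{\mathrm{eff}}\mathbf 1 .
\]
This equals $\mathbf 1$ entrywise provided every entry of $s=\widehat P_i^{\mathrm{eff}}\mathbf 1$ is nonzero. That proviso is the only substantive point, and it is where the phrase ``every attention row considered in this work'' has to be used.

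To show $s>0$, I would argue as follows.
\begin{itemize}
\item Every exponential satisfies $\widetilde P_{ij}\ge 0$, and the E2M1 projection preserves sign, so $\widehat P_{ij}\ge 0$. The factors $\beta$ are positive, so $s\ge 0$ entrywise.
\item For positivity, look at the tile $j^\ast$ where row $r$ first attains its final maximum $m_i^{(T_c)}$. That entry has $\widetilde P=\exp(0)=1$ and $\beta=1$.
\item This entry is also the block maximum $M$ of its MXFP4 block, because all exponentials in that tile are $\le 1$. Under the UOS ceiling rule, $M/\sigma\in(q^\star/2,q^\star]=(3.625,7.25]$, so it rounds to a grid value of at least $4$ and is nonzero after quantization.
\item Hence $s_r\ge\widehat P>0$.
\end{itemize}

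The main obstacle is making this positivity claim airtight. Two details need care. One is the block layout: the MXFP4 block must contain the row-max element, and the tile's local maximum equals $1$ only when that tile attains the running maximum. The other is the assumption that the required E8M0 exponent is representable, already made in the UOS analysis. I would state these as the standing assumptions covered by ``rows considered in this work'' rather than prove them in general.
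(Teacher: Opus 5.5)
Your proposal is correct and follows the paper's route. The telescoping unrolling into $\widehat P_i^{\mathrm{eff}}$ is the paper's effective full-row lemma, the final step is the same $\operatorname{Diag}(s)^{-1}s=\mathbf 1$ substitution, and positivity of $\ell_i$ comes, as in the paper, from the tile attaining the final row maximum; the paper computes $\mathcal Q_\star(1)=1$ exactly (scale $2^{-2}$, normalized value $4$), giving $\ell_i\ge 1$. The caveats you flag are not real obstacles: since $m_i^{(j)}\ge\operatorname{rowmax}(S_{ij})$, every entry of every $\widetilde P_{ij}$ lies in $(0,1]$, so the block containing the maximizing entry has maximum exactly $1$ whatever the layout, and the exponent $-2$ is trivially within the E8M0 range.
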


\begin{proof}
Substituting Eq.~\eqref{eq:pnq_induced_weights} gives
\begin{equation}
P_i^{\mathrm{PNQ}}\mathbf1
=
\operatorname{Diag}\!\left(\widehat P_i^{\mathrm{eff}}\mathbf1\right)^{-1}
\widehat P_i^{\mathrm{eff}}\mathbf1
=
\mathbf1.
\end{equation}
\end{proof}

PNQ does not change the element-wise MXFP4 quantizer. Its role is to make the resulting quantization error enter the \(\ell_i\) and \(\widetilde O_i\) updates consistently, preventing it from becoming an additional row-dependent scale error. For the attention rows considered in this work, the key set is nonempty. Appendix~\ref{app:pnq_positive_normalizer} shows that the tile containing the final row maximum contributes an exponential value of \(1\), which is represented exactly under \(q^\star=7.25\); hence, \(\ell_i>0\). Ordinary floating-point accumulation and division may still introduce their usual rounding error; Proposition~\ref{prop:pnq_normalization} concerns the normalization identity of the induced weights.

PNQ requires no additional pass over the attention matrix. The quantized exponential tile already used by the low-precision \(PV\) GEMM is reused in the \(\ell_i\) update. The mechanism-validation experiments evaluate the row-sum behavior of the direct placement and PNQ.

\FloatBarrier

\subsection{Overall MXAttention Pipeline}
\label{subsec:overall_pipeline}

MXAttention integrates UOS and PNQ into tiled attention computation. At runtime, a fixed orthogonal Hadamard rotation \(R_{\mathrm H}\) is applied to \(Q\) and \(K\) for outlier suppression. Let \(\mathcal{Q}_{\star}(\cdot)\) denote block-wise MXFP4 quantization using the fixed UOS boundary \(q^\star=7.25\). The MXFP4 quantization of \(Q\), \(K\), \(V\), and the unnormalized softmax exponential tiles \(\widetilde P_{ij}\) all uses this boundary. PNQ reuses each quantized tile \(\widehat P_{ij}\) in both the row-wise sum and output-accumulator updates.

\begin{center}
\begin{minipage}{0.98\columnwidth}
\hrule height 0.8pt
\vspace{0.35em}

\captionsetup{type=algorithm,skip=3pt}
\captionof{algorithm}{MXAttention Forward Pass}
\label{alg:mxattention}

\vspace{-0.25em}
\hrule height 0.4pt
\vspace{0.45em}

\begin{algorithmic}[1]
\Statex \textbf{Input:} Query, key, and value tensors \(Q,K,V\); fixed Hadamard rotation \(R_{\mathrm H}\); UOS boundary \(q^\star=7.25\)
\Statex \textbf{Output:} Attention output \(O\)

\State \(Q^{\mathrm{rot}}\gets QR_{\mathrm H}\)
\State \(K^{\mathrm{rot}}\gets KR_{\mathrm H}\)
\State \(\widehat Q\gets\mathcal{Q}_{\star}(Q^{\mathrm{rot}})\)
\State \(\widehat K\gets\mathcal{Q}_{\star}(K^{\mathrm{rot}})\)
\State \(\widehat V\gets\mathcal{Q}_{\star}(V)\)

\For{each query tile \(i\)}
    \State \(m_i\gets-\infty,\quad \ell_i\gets0,\quad \widetilde O_i\gets0\)
    \For{each key/value tile \(j\)}
        \State \(S_{ij}\gets\widehat Q_i\widehat K_j^\top/\sqrt d\)
        \State \(m_i^{\mathrm{new}}\gets\max\!\left(m_i,\operatorname{rowmax}(S_{ij})\right)\)
        \State \(\alpha_i\gets\exp\!\left(m_i-m_i^{\mathrm{new}}\right)\)
        \State \(\widetilde P_{ij}\gets\exp\!\left(S_{ij}-m_i^{\mathrm{new}}\mathbf1_{B_c}^{\top}\right)\)
        \State \(\widehat P_{ij}\gets\mathcal{Q}_{\star}(\widetilde P_{ij})\)
        \State \(\ell_i\gets\alpha_i\odot\ell_i+\widehat P_{ij}\mathbf1_{B_c}\)
        \State \(\widetilde O_i\gets\operatorname{Diag}(\alpha_i)\widetilde O_i+\widehat P_{ij}\widehat V_j\)
        \State \(m_i\gets m_i^{\mathrm{new}}\)
    \EndFor
    \State \(O_i\gets\operatorname{Diag}(\ell_i)^{-1}\widetilde O_i\)
\EndFor
\State \textbf{return} \(O\)
\end{algorithmic}

\vspace{0.35em}
\hrule height 0.8pt
\end{minipage}
\end{center}

\section{Experiments}
\label{sec:experiments}

We evaluate MXAttention on two large-scale video diffusion models through end-to-end generation quality, fully 4-bit ablations, and mechanism-level analyses of UOS and PNQ.

\subsection{Experimental Setup}

\noindent\textbf{Models.} We evaluate two large-scale open-source text-to-video models with distinct attention architectures: \texttt{Wan2.2-14B} and \texttt{HunyuanVideo-13B}. \texttt{Wan2.2} employs separate self-attention and cross-attention modules, whereas \texttt{HunyuanVideo} adopts dual-stream and single-stream Transformer blocks with full attention.

\vspace{0.3em}

\noindent\textbf{Datasets \& Generation.} We use the same fixed prompt subset from the Open-Sora prompt suite for all compared methods~\cite{zheng2024opensora}. For each model and prompt, all methods use the same generation seed, enabling paired comparisons with the FP16 output.

\vspace{0.3em}

\noindent\textbf{Evaluation Metrics.} We evaluate generated-video quality with VBench~\cite{vbench}, reporting \textit{Subject Consistency}, \textit{Imaging Quality}, and \textit{Aesthetic Quality}, averaged over all generated videos. To measure frame-level similarity to the FP16 baseline, we report frame-level cosine similarity, Structural Similarity Index Measure (SSIM), and Peak Signal-to-Noise Ratio (PSNR), averaged over frames.

\vspace{0.3em}

\noindent\textbf{Implementation Details.} We follow the recommended inference settings of both models. Wan2.2 generates 81-frame videos at 720p resolution using 40 denoising steps, while HunyuanVideo generates 129-frame videos at 720p resolution using 50 denoising steps. Only the Wan2.2 main comparison in Table~\ref{tab:main_vbench_results} uses a fixed hybrid-precision policy: Block~0 and the final two denoising steps (38 and 39) remain in high precision, while all remaining attention computations use 4-bit attention. All other experiments use fully 4-bit attention, including the HunyuanVideo main comparison, the ablation studies on both models, and all mechanism-validation experiments. Here, fully 4-bit attention means that every attention block at every denoising step uses the 4-bit attention kernel; non-attention model components retain their original precision.

\subsection{Baselines and Configurations}

We compare MXAttention with FP16 and representative 4-bit attention configurations:
\begin{itemize}
    \item \textbf{FP16.} The unmodified full-precision attention implementation.
    \item \textbf{MXFP4 (OCP).} A direct MXFP4 attention baseline using the exact floor-based OCP conversion rule~\cite{ocp2023microscaling,rouhani2023microscaling}. The unnormalized softmax tile \(\widetilde P\) is quantized into \(\widehat P=\mathcal{Q}_{\star}(\widetilde P)\) for the output-accumulator update, while the row-wise sum \(\ell\) is updated from the unquantized tile.
    \item \textbf{NVFP4.} A direct NVFP4 implementation in which \(Q\), \(K\), \(V\), and \(\widetilde P\) are quantized for the two attention GEMMs. No additional smoothing or softmax-path scaling is applied~\cite{nvidia_nvfp4,transformer_engine_fp4}.
    \item \textbf{NVFP4 + SageAttention.} The NVFP4 baseline augmented with channel-wise \(Q/K\) smoothing and two-level scaling for the softmax-path tile, following the SageAttention design~\cite{zhang2025sageattention3}.
    \item \textbf{MXAttention.} Our method applies a fixed Hadamard rotation to \(Q/K\) for outlier suppression~\cite{quarot,spinquant}, uses the UOS boundary \(Q_{\max}=7.25\) for all MXFP4 quantization steps, and applies PNQ so that the same quantized softmax tile updates both \(\ell\) and the output-accumulator state.
\end{itemize}

\subsection{Main Results}
\label{subsec:main_results}

\begin{table*}[!t]
\centering
\caption{\textbf{End-to-end evaluation of 4-bit attention methods.} All 4-bit Wan2.2 methods use the same hybrid-precision policy, retaining high precision for Block~0 and denoising steps 38--39. HunyuanVideo uses fully 4-bit attention. Bold values denote the best result among quantized methods.}
\label{tab:main_vbench_results}
\resizebox{\textwidth}{!}{
\begin{tabular}{llcccccc}
\toprule
\multirow{2}{*}{\textbf{Model}} &
\multirow{2}{*}{\textbf{Method}} &
\multicolumn{3}{c}{\textbf{VBench Metrics} (\(\uparrow\))} &
\multicolumn{3}{c}{\textbf{Similarity to FP16} (\(\uparrow\))} \\
\cmidrule(lr){3-5}\cmidrule(lr){6-8}
& & \textbf{Subject} & \textbf{Imaging} & \textbf{Aesthetic} & \textbf{Cosine} & \textbf{SSIM} & \textbf{PSNR} \\
\midrule
\multirow{5}{*}{Wan2.2}
& FP16 & 0.9562 & 0.7085 & 0.6042 & -- & -- & -- \\
\cmidrule(lr){2-8}
& MXFP4 (OCP) & 0.9517 & 0.6414 & 0.6202 & 0.9290 & 0.5076 & 15.58 \\
& NVFP4 & \textbf{0.9583} & 0.6973 & 0.6158 & 0.9278 & 0.5275 & 15.52 \\
& NVFP4 + SageAttention & 0.9558 & 0.6980 & 0.6127 & 0.9510 & 0.6177 & 17.72 \\
& MXAttention & 0.9544 & \textbf{0.7054} & \textbf{0.6229} & \textbf{0.9536} & \textbf{0.6319} & \textbf{17.92} \\
\midrule
\multirow{5}{*}{HunyuanVideo}
& FP16 & 0.9665 & 0.6185 & 0.6242 & -- & -- & -- \\
\cmidrule(lr){2-8}
& MXFP4 (OCP) & \textbf{0.9781} & 0.4459 & 0.5928 & 0.9489 & 0.5954 & 16.25 \\
& NVFP4 & 0.9666 & 0.6221 & 0.6162 & 0.9635 & 0.6359 & 17.80 \\
& NVFP4 + SageAttention & 0.9649 & 0.6063 & 0.6288 & \textbf{0.9800} & \textbf{0.7424} & \textbf{20.57} \\
& MXAttention & 0.9690 & \textbf{0.6380} & \textbf{0.6344} & 0.9745 & 0.7061 & 19.23 \\
\bottomrule
\end{tabular}
}
\end{table*}

Table~\ref{tab:main_vbench_results} shows that direct OCP MXFP4 attention causes a substantial loss in generation quality. Imaging Quality decreases from 0.7085 to 0.6414 on Wan2.2 and from 0.6185 to 0.4459 on HunyuanVideo. MXAttention raises the scores to 0.7054 and 0.6380, respectively: it recovers 95.4\% of the FP16--MXFP4 gap on Wan2.2 and fully closes the gap on HunyuanVideo, where it exceeds the FP16 score.

Among the quantized methods, MXAttention achieves the highest Imaging and Aesthetic scores on both models. On Wan2.2, it also obtains the highest cosine similarity, SSIM, and PSNR, leading five of the six reported metrics. Its Imaging Quality is only 0.0031 below FP16, while its Aesthetic Quality exceeds the FP16 baseline.

The HunyuanVideo comparison uses fully 4-bit attention across all blocks and denoising steps. Relative to direct OCP MXFP4, MXAttention improves Imaging Quality by 0.1921 and Aesthetic Quality by 0.0416, while increasing SSIM from 0.5954 to 0.7061 and PSNR from 16.25 to 19.23~dB. It also exceeds FP16 in both Imaging and Aesthetic Quality. NVFP4 + SageAttention obtains the highest paired frame-level similarity on this model, while MXAttention leads Imaging and Aesthetic Quality.

Overall, MXAttention recovers the generation-quality loss of direct MXFP4 and substantially improves similarity to FP16. Figure~\ref{fig:video_snapshot} provides qualitative comparisons under identical prompts and random seeds. Direct OCP MXFP4 produces noticeable changes in object identity and appearance compared with FP16. NVFP4-based methods reduce these artifacts but still exhibit minor appearance variations in some cases. MXAttention better preserves the semantic content and visual appearance of the FP16 generation, consistent with the improvements in Imaging Quality and frame-level similarity. These results demonstrate that MXFP4 attention with proper quantization techniques can retain FP16-level generation quality and remain competitive with strong NVFP4-based pipelines.

\begin{figure*}[!t]
\centering
\includegraphics[width=\textwidth]{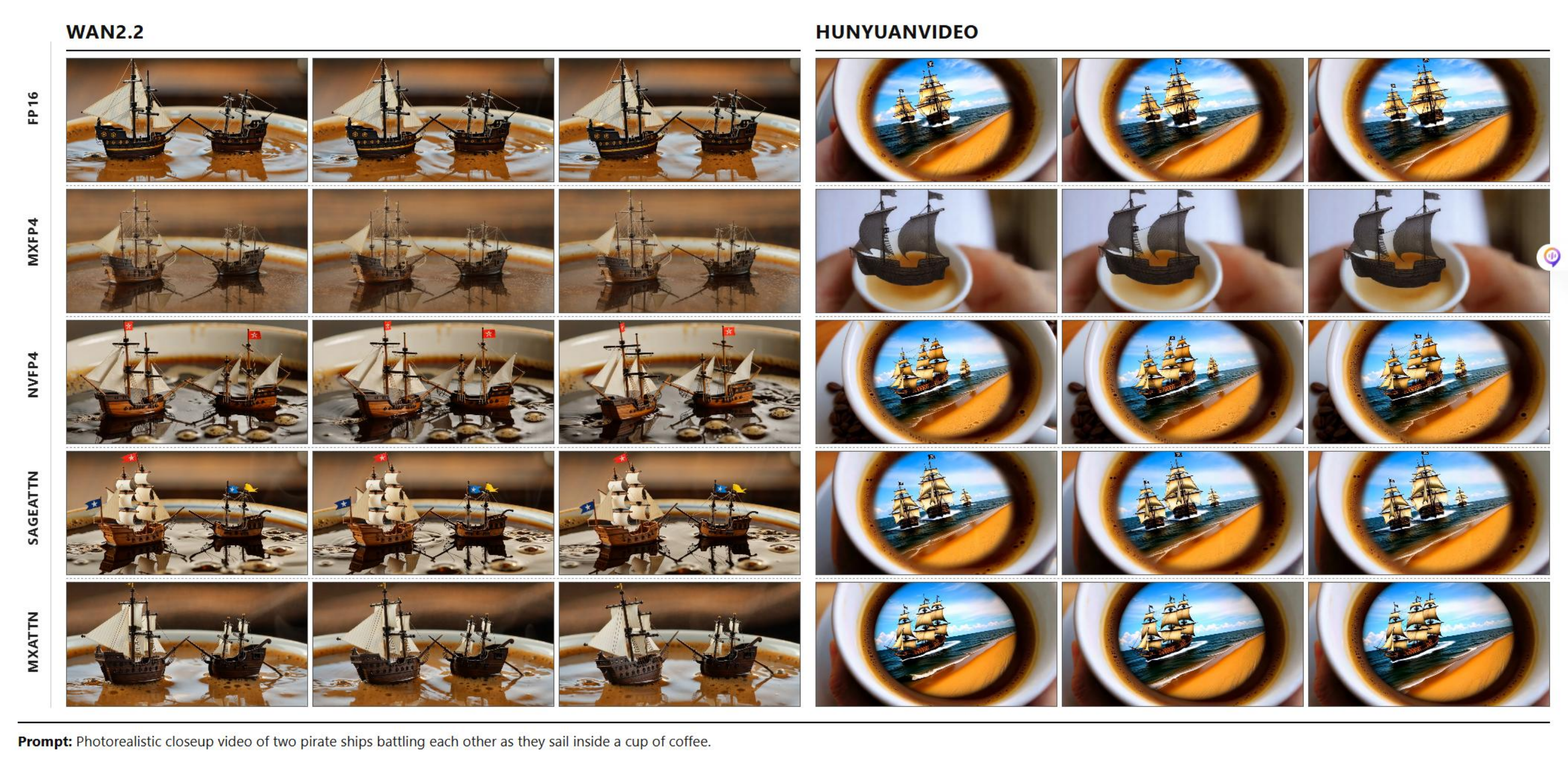}
\caption{\textbf{Qualitative comparison of 4-bit attention methods.} Each row shows frames generated with the same prompt and random seed. The left three columns correspond to a Wan2.2 video and the right three columns correspond to a HunyuanVideo video, with multiple frames sampled from the same generated video. Direct OCP MXFP4 produces noticeable changes in object identity and appearance, while MXAttention better preserves the FP16 generation.}
\label{fig:video_snapshot}
\end{figure*}

\FloatBarrier

\subsection{Ablation Studies}
\label{sec:ablation}

We evaluate the contribution of the fixed Hadamard rotation, PNQ, and UOS under fully 4-bit attention. Unlike the main Wan2.2 comparison, no high-precision block or denoising-step fallback is used in this study. The rows form a sequential ablation: UOS is first removed from the full method, followed by PNQ and then the fixed Hadamard rotation.

\begin{table*}[!t]
\centering
\caption{\textbf{Sequential ablation under fully 4-bit attention.} All attention blocks and denoising steps are quantized without high-precision fallback. Bold values denote the best result within each model.}
\label{tab:ablation_results}
\resizebox{\textwidth}{!}{
\begin{tabular}{llccc|cccccc}
\toprule
\multirow{2}{*}{\textbf{Model}} &
\multirow{2}{*}{\textbf{Configuration}} &
\multirow{2}{*}{\textbf{Hadamard}} &
\multirow{2}{*}{\textbf{PNQ}} &
\multirow{2}{*}{\textbf{UOS}} &
\multicolumn{3}{c}{\textbf{VBench Metrics} (\(\uparrow\))} &
\multicolumn{3}{c}{\textbf{Similarity to FP16} (\(\uparrow\))} \\
\cmidrule(lr){6-8}\cmidrule(lr){9-11}
& & & & & \textbf{Subject} & \textbf{Imaging} & \textbf{Aesthetic} & \textbf{Cosine} & \textbf{SSIM} & \textbf{PSNR} \\
\midrule
\multirow{4}{*}{Wan2.2}
& Full & \(\checkmark\) & \(\checkmark\) & \(\checkmark\) & 0.9424 & \textbf{0.6842} & 0.5994 & \textbf{0.9329} & \textbf{0.5519} & \textbf{15.96} \\
& w/o UOS & \(\checkmark\) & \(\checkmark\) & -- & 0.9402 & 0.6822 & \textbf{0.6176} & 0.9143 & 0.5084 & 14.90 \\
& w/o PNQ and UOS & \(\checkmark\) & -- & -- & 0.9442 & 0.6352 & 0.6081 & 0.9063 & 0.4727 & 14.77 \\
& OCP MXFP4 & -- & -- & -- & \textbf{0.9461} & 0.5452 & 0.5998 & 0.9127 & 0.4636 & 14.78 \\
\midrule
\multirow{4}{*}{HunyuanVideo}
& Full & \(\checkmark\) & \(\checkmark\) & \(\checkmark\) & 0.9689 & 0.6380 & \textbf{0.6344} & \textbf{0.9745} & \textbf{0.7061} & \textbf{19.23} \\
& w/o UOS & \(\checkmark\) & \(\checkmark\) & -- & 0.9713 & \textbf{0.6423} & 0.6229 & 0.9676 & 0.6740 & 18.41 \\
& w/o PNQ and UOS & \(\checkmark\) & -- & -- & 0.9759 & 0.5321 & 0.6046 & 0.9643 & 0.6520 & 17.87 \\
& OCP MXFP4 & -- & -- & -- & \textbf{0.9781} & 0.4459 & 0.5928 & 0.9489 & 0.5954 & 16.25 \\
\bottomrule
\end{tabular}
}
\end{table*}

Table~\ref{tab:ablation_results} shows that UOS, PNQ, and the fixed Hadamard rotation provide complementary gains under fully 4-bit attention.

UOS consistently improves cosine similarity, SSIM, and PSNR on both models. Compared with the configuration without UOS, the full method improves these metrics from 0.9143/0.5084/14.90 to 0.9329/0.5519/15.96 on Wan2.2 and from 0.9676/0.6740/18.41 to 0.9745/0.7061/19.23 on HunyuanVideo. The consistent gains in paired similarity support the analytical boundary \(Q_{\max}=7.25\) as a more accurate MXFP4 scale choice.

PNQ produces the largest gain in Imaging Quality. Adding PNQ increases the score from 0.6352 to 0.6822 on Wan2.2 and from 0.5321 to 0.6423 on HunyuanVideo, while also improving all three paired similarity metrics. These gains agree with the analysis in Section~\ref{subsec:pnq}: using the same quantized exponential tile for the row-wise sum and output-accumulator updates avoids the additional row-dependent scale error of direct softmax-path quantization.

The fixed Hadamard rotation further reduces the effect of \(Q/K\) outliers. Adding it to direct OCP MXFP4 raises Imaging Quality from 0.5452 to 0.6352 on Wan2.2 and from 0.4459 to 0.5321 on HunyuanVideo. On HunyuanVideo, it also improves cosine similarity, SSIM, and PSNR.

The full configuration combines the highest cosine similarity, SSIM, and PSNR on both models with the highest Imaging Quality on Wan2.2 and the highest Aesthetic Quality on HunyuanVideo. These results show that UOS, PNQ, and the fixed Hadamard rotation address complementary sources of MXFP4 error and together provide the strongest overall balance.

Figure~\ref{fig:video_snapshot2} provides qualitative ablation results under fully 4-bit attention. Removing UOS gradually introduces more visual artifacts, while further removing PNQ leads to additional degradation in generated details. Direct OCP MXFP4 exhibits the most severe quality degradation, with noticeable blurring and artifact patterns. The qualitative results are consistent with the quantitative ablation study, showing that UOS and PNQ provide complementary improvements.

\begin{figure*}[!t]
\centering
\includegraphics[width=\textwidth]{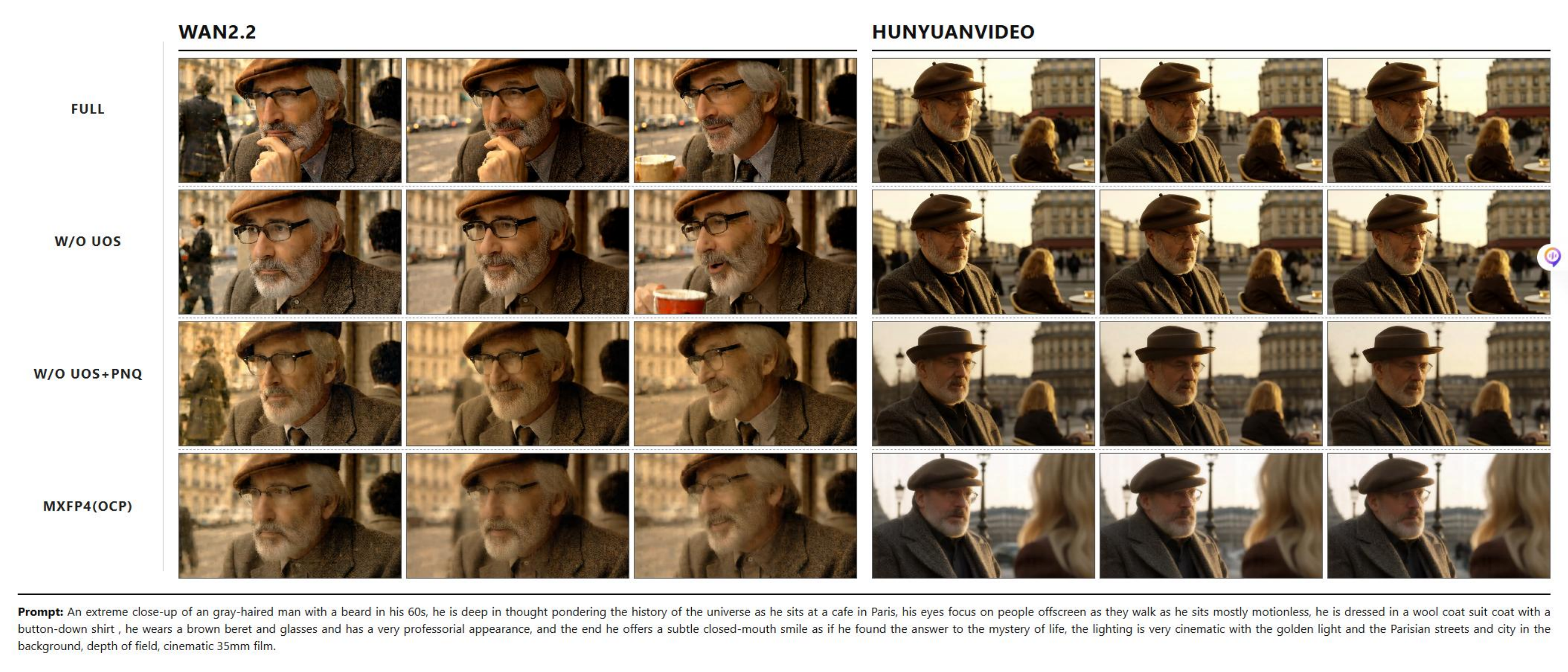}
\caption{\textbf{Qualitative ablation under fully 4-bit attention.} From top to bottom: full MXAttention, without UOS, without UOS and PNQ, and direct OCP MXFP4. Removing UOS and PNQ progressively increases visible artifacts and reduces generation quality.}
\label{fig:video_snapshot2}
\end{figure*}

\FloatBarrier

\subsection{Mechanism Validation}
\label{sec:mechanism_validation}

\subsubsection{Validation of Universal Optimal Scaling}
\label{subsubsec:uos_validation}

We compare UOS with the OCP boundary \(Q_{\max}=8\), TFS with \(Q_{\max}=6\), and a per-block empirical oracle obtained by sweeping \(Q_{\max}\) from 6.00 to 8.00 in increments of 0.05. At the initial denoising step, the oracle independently minimizes MSE for \(Q\), \(K\), and \(V\) in each of the 40 evaluated attention blocks.

\begin{table}[!t]
\centering
\caption{\textbf{Per-block empirical \(Q_{\max}\) optima.} Values are percentages over 40 attention blocks.}
\label{tab:cx_distribution}
\scriptsize
\setlength{\tabcolsep}{3.5pt}
\renewcommand{\arraystretch}{1.08}
\begin{tabular}{llrrrr}
\toprule
\textbf{Model} & \textbf{Tensor}
& \(\boldsymbol{\leq 7.15}\)
& \(\boldsymbol{7.20}\)
& \(\boldsymbol{7.25}\)
& \(\boldsymbol{\geq 7.30}\) \\
\midrule
\multirow{3}{*}{Wan2.2}
& \(Q\) & 0.0 & 5.0  & \textbf{95.0} & 0.0 \\
& \(K\) & 0.0 & 10.0 & \textbf{90.0} & 0.0 \\
& \(V\) & 0.0 & 10.0 & \textbf{90.0} & 0.0 \\
\midrule
\multirow{3}{*}{HunyuanVideo}
& \(Q\) & 45.0 & 0.0  & \textbf{52.5} & 2.5 \\
& \(K\) & 0.0  & \textbf{62.5} & 32.5 & 5.0 \\
& \(V\) & 0.0  & 50.0 & 50.0 & 0.0 \\
\bottomrule
\end{tabular}
\end{table}

For Wan2.2, \(Q_{\max}=7.25\) is the empirical optimum for 95.0\%, 90.0\%, and 90.0\% of the evaluated \(Q\), \(K\), and \(V\) blocks, respectively. On HunyuanVideo, \(7.25\) is the most frequent optimum for \(Q\) and is tied with \(7.20\) for \(V\), while the \(K\) optima concentrate at \(7.20\) (62.5\%) and \(7.25\) (32.5\%). Overall, \(7.25\) is the mode or co-mode in five of the six tensor--model combinations and the second-most frequent choice in the remaining case.

The sweep minimizes finite-sample tensor MSE, whereas UOS minimizes the data-free analytical objective in Eq.~\eqref{eq:uos_global_objective}. Their agreement supports using \(Q_{\max}=7.25\) without layer-wise calibration. The softmax path is validated separately below.

\subsubsection{Validation of Pre-Normalization Quantization}
\label{subsubsec:pnq_validation}

We use materialized attention probabilities as a diagnostic for the row-wise mass error introduced by block-wise MXFP4 quantization. FlashAttention does not materialize \(P\) during inference, and PNQ quantizes the unnormalized exponential tile \(\widetilde P\). This diagnostic isolates how local MXFP4 rounding changes row-wise probability mass and complements the online-softmax analysis in Section~\ref{subsec:pnq}.

\begin{figure*}[!t]
\centering
\includegraphics[width=\textwidth]{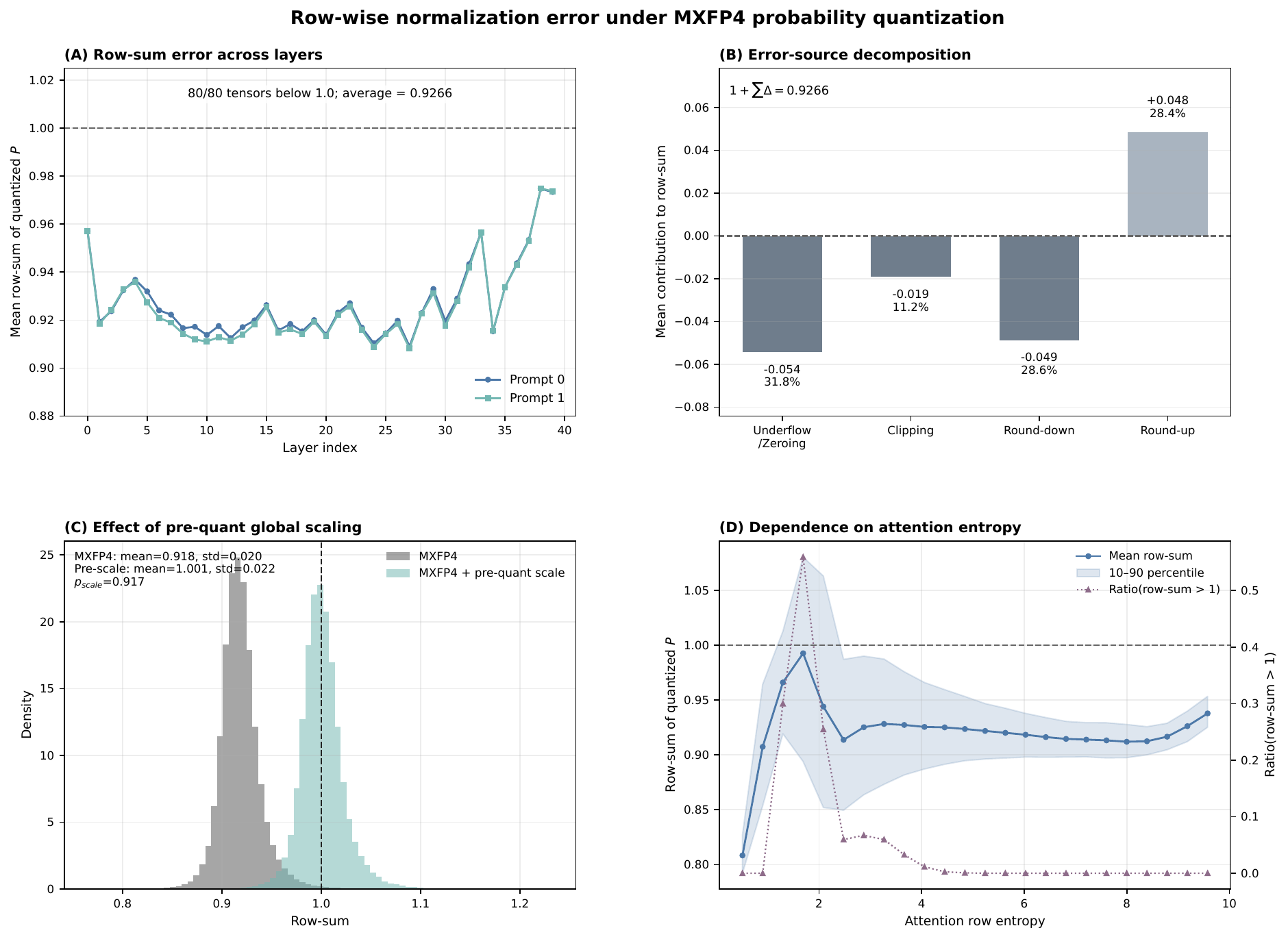}
\caption{\textbf{Row-sum diagnostics for MXFP4 probability quantization.} (A) Across 80 Wan2.2 attention tensors at denoising step~9, covering two prompts and 40 layers, the mean row sum of each quantized tensor is below one. (B) The row-sum error is decomposed into zeroing, saturation, downward rounding, and upward rounding. (C) Pre-quant global scaling can restore the average row sum of a representative tensor but does not enforce normalization for individual rows. (D) High-entropy rows are predominantly under-normalized, whereas low-entropy rows may be under- or over-normalized.}
\label{fig:pnq_rowsum}
\end{figure*}

Across the 80 tensors in Figure~\ref{fig:pnq_rowsum}(A), the average of the tensor-level mean row sums is 0.9266. This does not imply that every row has a sum below one; individual rows can be under- or over-normalized. It instead shows a consistent negative bias after aggregation across layers and prompts.

Figure~\ref{fig:pnq_rowsum}(B) separates the error by quantization outcome. Averaged across the tensors, zeroing contributes \(-0.0541\), saturation contributes \(-0.0190\), downward rounding contributes \(-0.0486\), and upward rounding contributes \(+0.0483\). The row-sum error is therefore not equivalent to a single global scaling factor.

Figure~\ref{fig:pnq_rowsum}(C) evaluates a pre-quant global scaling baseline, which applies a single scaling factor to the entire softmax tile before MXFP4 quantization. Although global scaling can restore the average row sum of a representative tensor, it substantially increases the number of rows with over-normalized attention weights. Correcting the mean therefore does not restore row-wise normalization.

Finally, Figure~\ref{fig:pnq_rowsum}(D) shows that row-sum behavior depends on attention entropy. Rows with entropy above 5 account for approximately 75\% of the representative tensor and are almost always under-normalized, while lower-entropy rows exhibit errors in both directions. PNQ avoids this row-dependent normalization error by using the same quantized exponential tile in the row-wise sum and output-accumulator updates.

\FloatBarrier

\subsection{Algorithmic Overhead and Kernel Integration}
\label{sec:hardware_efficiency}

MXAttention preserves the tiled dataflow of low-precision attention. UOS changes only the boundary used in shared-scale selection; it retains the same block-maximum reduction, quantization granularity, and tensor traversal as direct MXFP4 quantization. PNQ reuses the quantized exponential tile \(\widehat P\), already required by the low-precision output-accumulator update, to update the row-wise sum \(\ell\). It therefore requires neither an additional quantization operation nor another traversal of the attention tiles.

Moreover, rotation and quantization can be executed in parallel by leveraging orthogonal compute units (GEMM vs. non-GEMM units), eliminating resource contention. The fixed Hadamard transform is applied online only to \(Q\) and \(K\); the corresponding transformation outside this path is folded into the model weights. Because the rotation and block quantization operate locally on \(Q/K\) tiles, they can be fused into the \(Q/K\) preprocessing stage.

These modifications add no attention-matrix materialization or extra pass over the key/value sequence. They therefore introduce negligible algorithmic overhead and are compatible with a fused low-precision attention implementation.

\section{Conclusion}
\label{sec:conclusion}

This paper presents \textbf{MXAttention}, a data-free post-training quantization framework for MXFP4 attention in video diffusion models. We identify two numerical failure modes in direct MXFP4 attention: the clipping--underflow trade-off introduced by power-of-two block scaling and the mismatch between the row-wise sum and output-accumulator update caused by inconsistent quantization in the online-softmax loop. To address them, \textbf{Universal Optimal Scaling (UOS)} derives the closed-form boundary \(Q_{\max}=7.25\) as the distribution-independent minimizer of a global MXFP4 quantization-error objective, while \textbf{Pre-Normalization Quantization (PNQ)} uses the same quantized softmax exponential tiles for both the row-wise sum and output-accumulator updates, preserving row-wise normalization by construction.

Experiments on Wan2.2 and HunyuanVideo show that MXAttention recovers at least 95\% of the VBench Imaging Quality gap between direct OCP MXFP4 and FP16, substantially improves frame-level similarity, and maintains FP16-level generation quality with less than 0.01 absolute degradation on all reported VBench metrics. MXAttention also achieves results competitive with strong NVFP4-based baselines. UOS changes only the shared-scale selection, and PNQ reuses the quantized exponential tiles already required by the low-precision output path; neither requires an additional pass over the attention matrix. These results demonstrate that accurate MXFP4 attention can be achieved without calibration, quantization-aware training, or layer-wise parameter search.

\bibliographystyle{IEEEtran}
\bibliography{references}

\appendix

\section{Exact Density under Power-of-Two Scaling}
\label{app:uos_density}

We analyze nonzero blocks for which the required shared-scale exponent lies within the E8M0 exponent range. The cases \(M=0\) and exponents outside this range are handled by the MXFP4 conversion routine and are omitted because they do not affect the derivation of \(q\). Let \(M>0\) be the unscaled block maximum, let \(U=\log_2M\) have density \(f_U\), and define
\begin{equation}
e_q(M)=\left\lceil\log_2\left(\frac Mq\right)\right\rceil,
\qquad
X_q=\frac{M}{2^{e_q(M)}}.
\end{equation}
For \(x\in(q/2,q)\), every preimage of \(x\) has the form
\begin{equation}
M=2^kx,
\qquad
U=\log_2x+k,
\qquad
k\in\mathbb Z.
\end{equation}
Indeed,
\begin{equation}
e_q(2^kx)=\left\lceil k+\log_2\left(\frac xq\right)\right\rceil=k,
\end{equation}
because \(\log_2(x/q)\in(-1,0)\). The Jacobian of \(U=\log_2x+k\) is
\begin{equation}
\left|\frac{dU}{dx}\right|=\frac1{x\ln2}.
\end{equation}
Summing over all preimages gives
\begin{equation}
g_q(x)=\frac1{x\ln2}\sum_{k\in\mathbb Z}f_U(\log_2x+k),
\qquad x\in(q/2,q).
\label{eq:app_exact_density}
\end{equation}

Define the periodized log-domain density
\begin{equation}
H(t)=\sum_{k\in\mathbb Z}f_U(t+k).
\label{eq:app_periodized_density}
\end{equation}
The function \(H\) has period one: \(H(t+1)=H(t)\). Equation~\eqref{eq:app_exact_density} can therefore be written as
\begin{equation}
g_q(x)=g(x)=\frac{H(\log_2x)}{x\ln2}.
\label{eq:app_density_compact}
\end{equation}
Thus, \(q\) selects a factor-of-two interval of a fixed periodized density rather than changing the density inside that interval.

The density integrates to one over every such interval. Let \(a=\log_2q\). Then
\begin{align}
\int_{q/2}^{q}g(x)\,dx
&=\int_{a-1}^{a}H(t)\,dt\\
&=\int_0^1H(t)\,dt\\
&=\sum_{k\in\mathbb Z}\int_k^{k+1}f_U(u)\,du\\
&=1.
\end{align}

For almost every \(q>0\), periodicity gives
\begin{align}
g\left(\frac q2\right)
&=\frac{H(\log_2q-1)}{(q/2)\ln2}\\
&=2\frac{H(\log_2q)}{q\ln2}\\
&=2g(q).
\label{eq:app_boundary_ratio}
\end{align}
This proves Lemma~\ref{lemma:periodic_boundary_matching}. The result requires only the exact periodicity of \(H\), not a log-uniform, Gaussian, or small-perturbation assumption.

\section{Integrated Quantization Error of the E2M1 Grid}
\label{app:e2m1_integrals}

For nonnegative magnitudes, the finite E2M1 grid and its round-to-nearest decision boundaries are
\begin{align}
\mathcal G_{\mathrm{E2M1}}&=\{0,0.5,1,1.5,2,3,4,6\},\\
\mathcal T_{\mathrm{E2M1}}&=\{0.25,0.75,1.25,1.75,2.5,3.5,5.0\}.
\end{align}
The value \(6\) is the largest finite E2M1 value, and \(5\) is the decision boundary between \(4\) and \(6\). Hence, every normalized value above \(5\) projects to \(6\). The threshold \(7\) has a different role: it is the midpoint between \(6\) and the next value \(8\) on the unbounded E2M1 ladder. Values above \(7\) would round upward to \(8\) before finite-range saturation forces them to \(6\). We refer to this subset as the overflow-rounding region.

For an interval \([a,b]\) mapped to a quantization value \(r\),
\begin{equation}
\int_a^b(v-r)^2\,dv=\frac{(b-r)^3-(a-r)^3}{3}.
\label{eq:app_interval_integral}
\end{equation}
Applying Eq.~\eqref{eq:app_interval_integral} successively gives the cumulative errors in Table~\ref{tab:e2m1_integrals}.

\begin{table}[!t]
\centering
\caption{Exact cumulative quantization error \(E(x)\) at E2M1 decision boundaries and selected grid points.}
\label{tab:e2m1_integrals}
\renewcommand{\arraystretch}{1.25}
\resizebox{\columnwidth}{!}{
\begin{tabular}{ccccccccc}
\toprule
\(E(0)\) & \(E(0.25)\) & \(E(0.75)\) & \(E(1.25)\) & \(E(1.75)\) & \(E(2.5)\) & \(E(3.5)\) & \(E(5)\) & \(E(6)\)\\
\midrule
\(0\) & \(\frac1{192}\) & \(\frac1{64}\) & \(\frac5{192}\) & \(\frac7{192}\) & \(\frac1{12}\) & \(\frac16\) & \(\frac{13}{24}\) & \(\frac78\)\\
\bottomrule
\end{tabular}}
\end{table}

For \(q\ge6\), every value above \(5\) maps to \(6\), giving
\begin{equation}
E(q)=E(6)+\int_6^q(v-6)^2\,dv=\frac78+\frac13(q-6)^3.
\label{eq:app_upper_error}
\end{equation}
For \(6\le q\le7\), \(q/2\in[3,3.5]\), where values map to \(3\). Hence,
\begin{equation}
E\left(\frac q2\right)=\frac18+\frac13\left(\frac q2-3\right)^3.
\label{eq:app_lower_error_a}
\end{equation}
For \(7\le q\le8\), \(q/2\in[3.5,4]\), where values map to \(4\). Hence,
\begin{equation}
E\left(\frac q2\right)=\frac5{24}+\frac13\left(\frac q2-4\right)^3.
\label{eq:app_lower_error_b}
\end{equation}
Substitution into \(\Delta(q)=E(q)-8E(q/2)\) gives
\begin{equation}
\Delta(q)=
\begin{cases}
-\dfrac18, & 6\le q\le7,\\[6pt]
\dfrac{(4q-29)(4q-27)}8, & 7\le q\le8.
\end{cases}
\label{eq:app_relevant_delta}
\end{equation}

\section{Global Optimality of the UOS Boundary}
\label{app:uos_global_optimality}

Since \(D(x)=E(x)/x^3\),
\begin{equation}
D(q)-D\left(\frac q2\right)=\frac{E(q)-8E(q/2)}{q^3}=\frac{\Delta(q)}{q^3}.
\label{eq:app_distortion_difference}
\end{equation}

\subsection{Justification of Differentiation}
\label{app:differentiation}

Let \(F(x)=D(x)g(x)\). Since \(D\) is continuous on \((0,\infty)\) and \(g\) is locally integrable, \(F\in L^1_{\mathrm{loc}}(0,\infty)\). Define
\begin{equation}
G(t)=\int_1^tF(x)\,dx.
\end{equation}
Then \(G\) is absolutely continuous on every compact subinterval of \((0,\infty)\) and \(G'(t)=F(t)\) for almost every \(t\). Since
\begin{equation}
\mathcal J(q)=G(q)-G(q/2),
\end{equation}
\(\mathcal J\) is absolutely continuous and, for almost every \(q\),
\begin{equation}
\mathcal J'(q)=F(q)-\frac12F(q/2).
\end{equation}
Using Eq.~\eqref{eq:app_boundary_ratio},
\begin{align}
\mathcal J'(q)
&=D(q)g(q)-\frac12D\left(\frac q2\right)g\left(\frac q2\right)\\
&=g(q)\left[D(q)-D\left(\frac q2\right)\right]\\
&=\frac{g(q)}{q^3}\Delta(q).
\label{eq:app_objective_derivative}
\end{align}
Because \(g(q)\ge0\), the derivative sign is determined by \(\Delta(q)\).

Table~\ref{tab:global_delta} gives the exact piecewise form of \(\Delta(q)\) over \(q>0\).

\begin{table*}[!t]
\centering
\caption{Exact piecewise form and sign of \(\Delta(q)=E(q)-8E(q/2)\).}
\label{tab:global_delta}
\renewcommand{\arraystretch}{1.2}
\small
\begin{tabular}{lll}
\toprule
\textbf{Interval} & \(\boldsymbol{\Delta(q)}\) & \textbf{Sign}\\
\midrule
\(0<q\le\frac14\) & \(0\) & \(0\)\\
\(\frac14<q\le\frac12\) & \(-\dfrac{(4q-1)^2}{32}\) & \(<0\)\\
\(\frac12<q\le\frac34\) & \(\dfrac{16q^2-24q+7}{32}\) & \(<0\)\\
\(\frac34<q\le\frac54\) & \(-\dfrac1{16}\) & \(<0\)\\
\(\frac54<q\le\frac32\) & \(-\dfrac{16q^2-40q+27}{32}\) & \(<0\)\\
\(\frac32<q\le\frac74\) & \(\dfrac{(4q-9)(4q-5)}{32}\) & \(<0\)\\
\(\frac74<q\le7\) & \(-\dfrac18\) & \(<0\)\\
\(7<q\le10\) & \(\dfrac{(4q-29)(4q-27)}8\) & \(<0\) before \(29/4\), \(=0\) at \(29/4\), \(>0\) afterward\\
\(q>10\) & \(\dfrac{48q^2-864q+3983}{8}\) & \(>0\)\\
\bottomrule
\end{tabular}
\end{table*}

The quadratic in the final row has discriminant \(-18240<0\) and a positive leading coefficient, so it is strictly positive. Therefore,
\begin{equation}
\Delta(q)
\begin{cases}
=0, & 0<q\le\frac14,\\
<0, & \frac14<q<\frac{29}{4},\\
=0, & q=\frac{29}{4},\\
>0, & q>\frac{29}{4}.
\end{cases}
\label{eq:app_global_delta_sign}
\end{equation}
Combining Eqs.~\eqref{eq:app_objective_derivative} and~\eqref{eq:app_global_delta_sign}, \(\mathcal J(q)\) is nonincreasing before \(q=29/4\) and nondecreasing afterward. Hence \(q^\star=29/4\), equivalently \(Q_{\max}^\star=7.25\), is a global minimizer for every absolutely continuous law of \(\log_2M\) within the stated scale family.

If \(H\) is positive almost everywhere over one period, \(\mathcal J(q)\) is constant on \(0<q\le1/4\), strictly decreasing on \((1/4,29/4)\), and strictly increasing on \((29/4,\infty)\). Therefore, \(Q_{\max}^\star=29/4\) is the unique global minimizer.

\section{Wrapped-Distribution and Harmonic Interpretation}
\label{app:phase_uniformity}

Reducing a real-valued random variable modulo one yields a periodic density, commonly called a wrapped distribution~\cite{mardia2000directional,bell2024wrapped}. It can be represented either as a sum of translated densities or through Fourier coefficients. These classical results provide the mathematical tools used below; the contribution of UOS is the boundary-density consequence for power-of-two microscaling.

Let
\begin{equation}
Y_q=(U-\log_2q)\bmod1\in[0,1).
\end{equation}
For noninteger \(U-\log_2q\), which occurs almost surely under an absolutely continuous law,
\begin{equation}
X_q=q\,2^{Y_q-1}.
\label{eq:app_phase_mapping}
\end{equation}
At integer endpoints, the ceiling convention maps the value to \(q\); this measure-zero distinction does not affect the density or the quantization-error integrals.

If \(Y_q\) is uniformly distributed, the Jacobian gives
\begin{equation}
f_{X_q}(x)=\frac1{x\ln2},
\qquad x\in(q/2,q].
\label{eq:app_log_uniform_density}
\end{equation}
For the ceiling-based \(q=8\) interval \((4,8]\), which differs from the exact OCP interval \([4,8)\) only at measure-zero endpoints, this approximation gives
\begin{equation}
\Pr(X_8>7)=\int_7^8\frac1{x\ln2}\,dx=\log_2\frac87\approx19.27\%.
\label{eq:app_overflow_probability}
\end{equation}
This is a diagnostic estimate under the uniform wrapped-phase approximation, not an assumption required by Theorem~\ref{thm:uos_optimality}.

\subsection{Fourier Representation}

The Fourier coefficients of the periodized density \(H\) are
\begin{align}
c_n
&=\int_0^1H(t)e^{-i2\pi nt}\,dt\\
&=\int_{-\infty}^{\infty}f_U(u)e^{-i2\pi nu}\,du\\
&=\phi_U(-2\pi n),
\end{align}
where \(\phi_U(\omega)=\mathbb E[e^{i\omega U}]\). The zeroth coefficient is \(c_0=1\). Under standard Fourier-convergence conditions~\cite{stein2003fourier},
\begin{equation}
H(t)=1+\sum_{n\ne0}\phi_U(-2\pi n)e^{i2\pi nt}.
\label{eq:app_wrapped_fourier}
\end{equation}
For a general integrable density, the coefficient identity remains valid even when pointwise convergence of the Fourier series is not assumed.

Equation~\eqref{eq:app_wrapped_fourier} gives the harmonic interpretation of Lemma~\ref{lemma:periodic_boundary_matching}. For every integer \(n\),
\begin{equation}
e^{i2\pi n(\log_2q-1)}=e^{i2\pi n\log_2q}.
\end{equation}
Thus, the uniform term and every nonzero Fourier mode take the same value at the two scale boundaries. The harmonic components can change the objective value and the magnitude of its derivative, but they cannot reverse the derivative sign or shift the UOS minimizer.

If \(U\sim\mathcal N(\mu,\sigma^2)\), the Fourier series becomes
\begin{equation}
H(t)=1+2\sum_{n=1}^{\infty}e^{-2\pi^2n^2\sigma^2}\cos\left(2\pi n(t-\mu)\right).
\label{eq:app_wrapped_gaussian}
\end{equation}
The first-harmonic amplitude is
\begin{equation}
a_1(\sigma)=2e^{-2\pi^2\sigma^2}.
\label{eq:app_first_harmonic}
\end{equation}

\begin{table}[!t]
\centering
\caption{First-harmonic amplitude under a Gaussian log-domain prior. The values describe the leading Fourier component, not the complete pointwise deviation of the wrapped density.}
\label{tab:first_harmonic}
\renewcommand{\arraystretch}{1.15}
\begin{tabular}{ccc}
\toprule
\(\boldsymbol{\sigma}\) & \(\boldsymbol{\sigma^2}\) & \(\boldsymbol{a_1(\sigma)}\)\\
\midrule
0.10 & 0.01 & 1.6417\\
0.20 & 0.04 & 0.9081\\
0.30 & 0.09 & 0.3384\\
0.40 & 0.16 & 0.0850\\
0.50 & 0.25 & 0.0144\\
0.60 & 0.36 & 0.0016\\
1.00 & 1.00 & \(5.35\times10^{-9}\)\\
\bottomrule
\end{tabular}
\end{table}

Higher harmonics may remain non-negligible when \(\sigma^2\) is small, so \(a_1(\sigma)\) should not be interpreted as a bound on the total deviation from uniformity.

\section{Wrapped-Phase Convergence for Location--Scale Families}

Let \(U_\sigma=\mu+\sigma Z\), where \(Z\) has an absolutely continuous density. For every nonzero integer \(n\),
\begin{equation}
\left|\phi_{U_\sigma}(-2\pi n)\right|=\left|\phi_Z(-2\pi n\sigma)\right|.
\end{equation}
Since \(Z\) has an integrable density, the Riemann--Lebesgue lemma gives \(\lim_{|t|\to\infty}\phi_Z(t)=0\)~\cite{stein2003fourier}. Therefore, \(\phi_{U_\sigma}(-2\pi n)\to0\) for every \(n\ne0\). By the Fourier characterization of the uniform distribution modulo one~\cite{kuipers1974uniform},
\begin{equation}
U_\sigma\bmod1\xrightarrow{d}\mathcal U(0,1).
\end{equation}
Thus, broad absolutely continuous location--scale families approach a uniform wrapped phase. Without additional regularity assumptions, this weak convergence does not imply uniform convergence of the corresponding densities.

\section{Empirical Wrapped-Phase Diagnostics}
\label{app:wrapped_diagnostics}

The following figures visualize block-maximum distributions before and after logarithmic wrapping. Figure~\ref{fig:qkvp_distribution} shows empirical distributions from Wan2.2 attention tensors, including the wrapped log-domain phases used in our analysis. Real attention tensors can exhibit clear nonuniform harmonic structure; UOS does not require these harmonics to be small.

\begin{figure*}[!t]
\centering
\includegraphics[width=\textwidth]{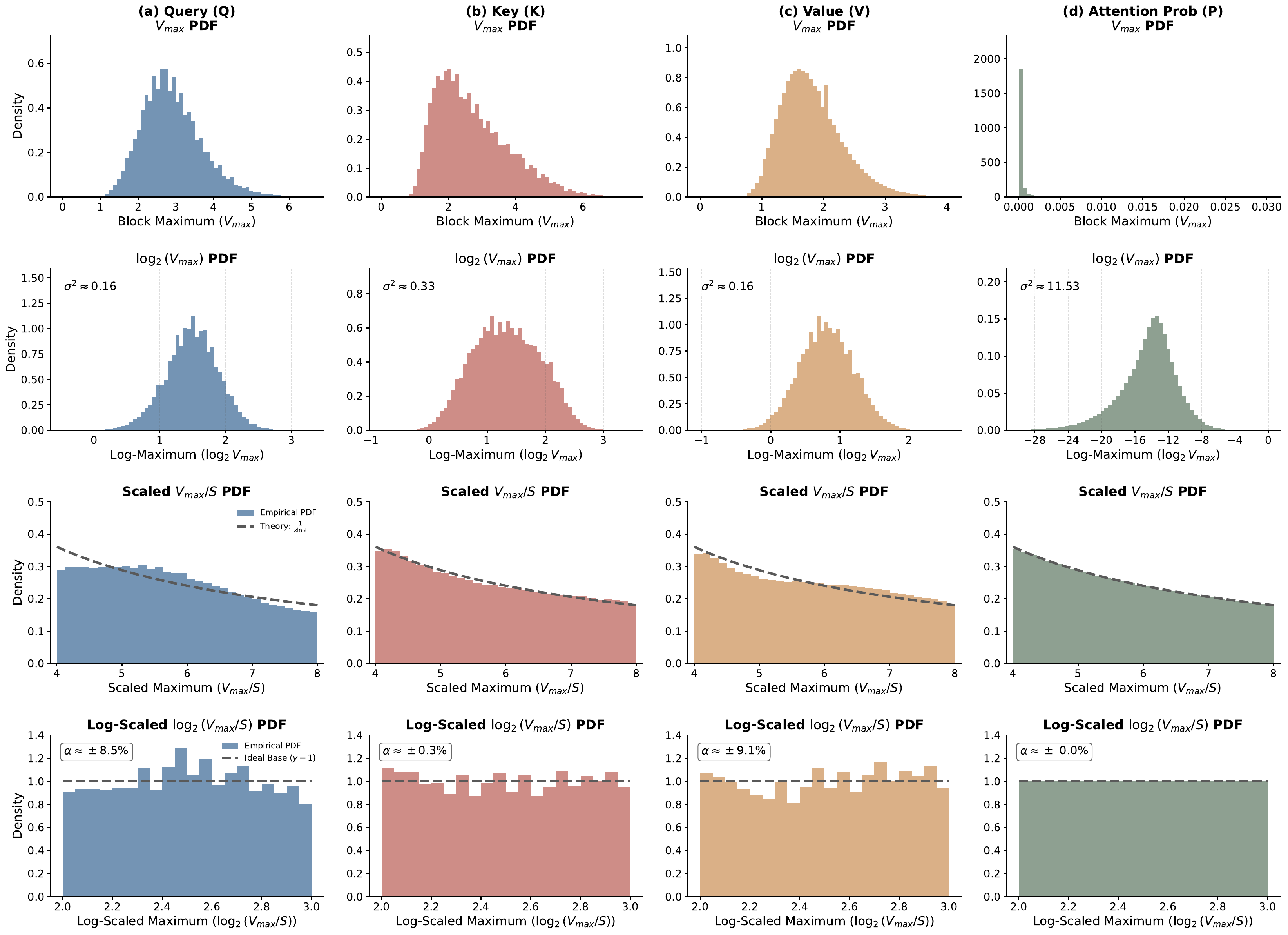}
\caption{\textbf{Empirical block-maximum distributions in Wan2.2.} The rows show the original block maxima, their logarithms, the wrapped log-domain phases, and the corresponding normalized linear-domain values for \(Q\), \(K\), \(V\), and the normalized attention probability matrix \(P\). The visible nonuniformity corresponds to distribution-specific harmonic components but does not change the UOS boundary condition. The \(P\) column is included only as a distributional diagnostic; PNQ quantizes the unnormalized softmax exponential tile.}
\label{fig:qkvp_distribution}
\end{figure*}

Figure~\ref{fig:rotation_impact} further examines how Hadamard rotation changes the empirical block-maximum distributions. Although the rotation modifies the log-domain harmonic structure, the periodic boundary relation used by UOS remains unchanged.

\begin{figure*}[!t]
\centering
\includegraphics[width=\textwidth]{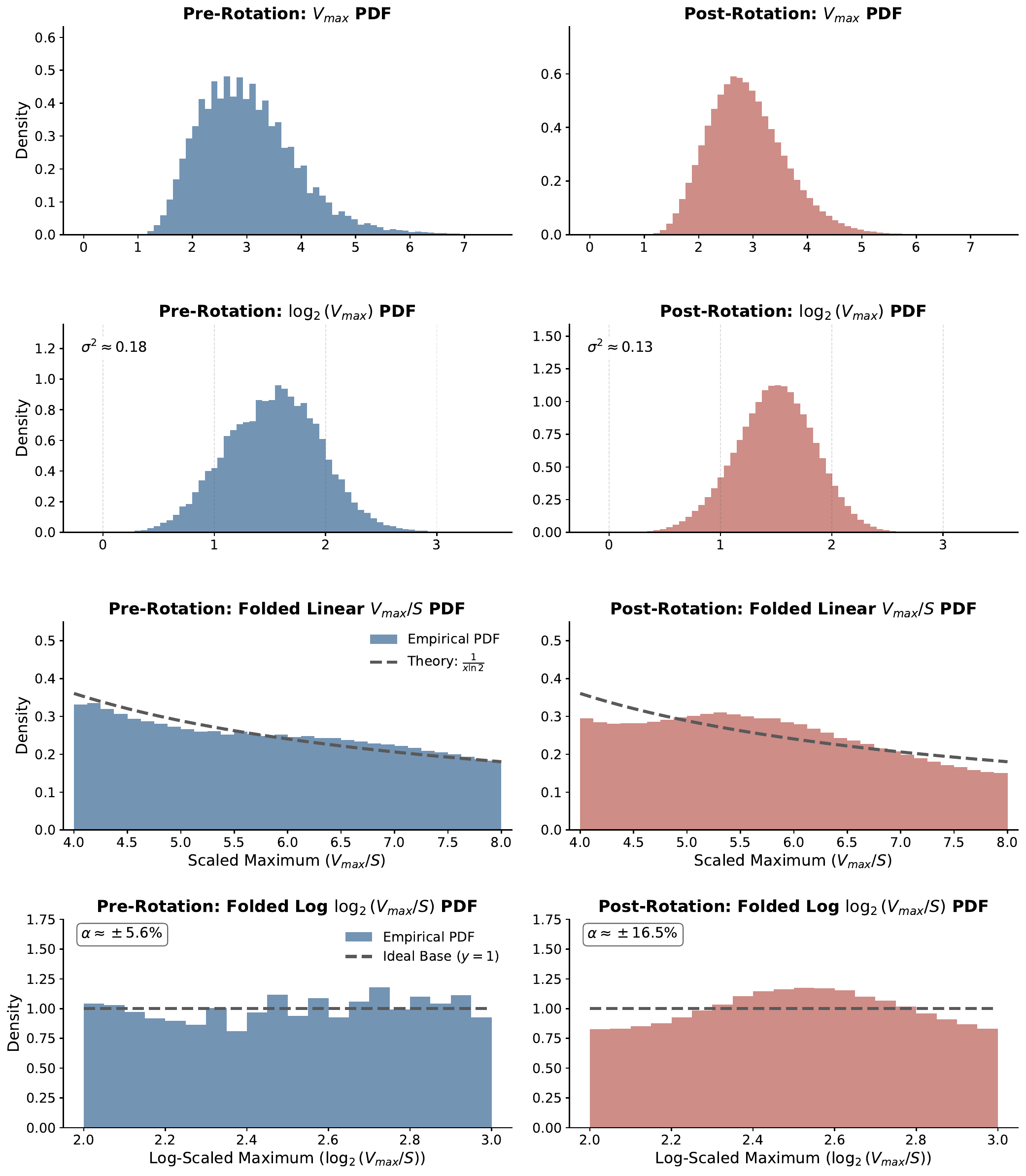}
\caption{\textbf{Effect of Hadamard rotation on block-maximum distributions.} Rotation changes the log-domain distribution and its harmonic content; the boundary density ratio remains unchanged.}
\label{fig:rotation_impact}
\end{figure*}

\section{Relation to OAS and the OCP Endpoint Convention}
\label{app:oas_relation}

At the common 32-element block granularity, the OAS scale-selection rule is equivalent to the ceiling-based boundary \(q=7\). Let
\begin{equation}
e_6=\left\lceil\log_2\left(\frac M6\right)\right\rceil,
\qquad
x_6=\frac{M}{2^{e_6}}\in(3,6].
\end{equation}
Then
\begin{equation}
e_7=e_6+\left\lceil\log_2\left(\frac{x_6}{7}\right)\right\rceil
=
\begin{cases}
e_6-1, & 3<x_6\le3.5,\\
e_6, & 3.5<x_6\le6.
\end{cases}
\end{equation}
Thus, \(q=7\) halves the TFS scale when \(x_6\in(3,3.5]\), mapping this interval to \((6,7]\), and otherwise leaves the scale unchanged. This is the OAS scale-selection rule at the same block granularity~\cite{oas}. The comparison does not include the smaller block size used in the full OAS configuration.

The experimental OCP baseline uses the exact floor-based conversion rule
\begin{equation}
e_{\mathrm{OCP}}=\lfloor\log_2M\rfloor-2
=\left\lfloor\log_2\left(\frac M8\right)\right\rfloor+1.
\label{eq:app_ocp_floor}
\end{equation}
The ceiling-based \(q=8\) rule used in the continuous boundary analysis is
\begin{equation}
e_8(M)=\left\lceil\log_2\left(\frac M8\right)\right\rceil.
\label{eq:app_q8_ceil}
\end{equation}
The two expressions agree whenever \(\log_2(M/8)\notin\mathbb Z\), since \(\lceil z\rceil=\lfloor z\rfloor+1\) for noninteger \(z\). They differ when \(M/8\) is an exact power of two. Under an absolutely continuous law, these endpoints have measure zero and do not affect the quantization-error integrals or Theorem~\ref{thm:uos_optimality}. All reported OCP experiments use the exact floor-based rule.

\section{Full Online-Softmax Form of PNQ}
\label{app:pnq_full_row}

Let \(T_c\) be the number of key/value tiles processed for query block \(i\). For each tile \(j\), define the row-wise rescaling factor from its current running maximum to the final running maximum as
\begin{equation}
\beta_i^{(j)}
=
\exp\!\left(m_i^{(j)}-m_i^{(T_c)}\right).
\label{eq:pnq_beta}
\end{equation}
Because
\begin{equation}
\alpha_i^{(r)}
=
\exp\!\left(m_i^{(r-1)}-m_i^{(r)}\right),
\end{equation}
the element-wise product of all subsequent rescaling factors telescopes:
\begin{equation}
\prod_{r=j+1}^{T_c}\alpha_i^{(r)}
=
\exp\!\left(m_i^{(j)}-m_i^{(T_c)}\right)
=
\beta_i^{(j)},
\label{eq:pnq_telescope}
\end{equation}
where the empty product for \(j=T_c\) is the all-ones vector.

\begin{lemma}[Effective Full-Row Form]
\label{lemma:pnq_effective_full_row}
After all key/value tiles have been processed, PNQ satisfies
\begin{equation}
\begin{aligned}
\ell_i^{(T_c)}
&=
\sum_{j=1}^{T_c}
\beta_i^{(j)}\odot
\left(\widehat P_{ij}\mathbf1_{B_c}\right),
\\
\widetilde O_i^{(T_c)}
&=
\sum_{j=1}^{T_c}
\operatorname{Diag}\!\left(\beta_i^{(j)}\right)
\widehat P_{ij}\widehat V_j.
\end{aligned}
\label{eq:pnq_unrolled}
\end{equation}
\end{lemma}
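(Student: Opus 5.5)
The plan is to argue by induction on the number of processed key/value tiles, proving a slightly stronger statement that holds at every intermediate step. For each \(t\in\{0,1,\ldots,T_c\}\), I would show
\begin{equation*}
\ell_i^{(t)}=\sum_{j=1}^{t}\exp\!\left(m_i^{(j)}-m_i^{(t)}\right)\odot\left(\widehat P_{ij}\mathbf1_{B_c}\right),
\qquad
\widetilde O_i^{(t)}=\sum_{j=1}^{t}\operatorname{Diag}\!\left(\exp\!\left(m_i^{(j)}-m_i^{(t)}\right)\right)\widehat P_{ij}\widehat V_j .
\end{equation*}
Specializing to \(t=T_c\) and using the definition of \(\beta_i^{(j)}\) in Eq.~\eqref{eq:pnq_beta} then gives Eq.~\eqref{eq:pnq_unrolled}.

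The base case \(t=0\) is immediate from the initialization \(\ell_i^{(0)}=0\), \(\widetilde O_i^{(0)}=0\), since both sums are empty. For \(t=1\), one can also check directly that the factor \(\alpha_i^{(1)}\) multiplies a zero state, so the convention \(m_i^{(0)}=-\infty\) with \(\alpha_i^{(1)}=0\) causes no issue.

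For the inductive step, I would substitute the hypothesis into the PNQ recurrence in Eq.~\eqref{eq:pnq_online_update}. This multiplies every earlier term by \(\alpha_i^{(t+1)}=\exp(m_i^{(t)}-m_i^{(t+1)})\). Because these are element-wise exponentials of row vectors, \(\exp(m_i^{(j)}-m_i^{(t)})\odot\exp(m_i^{(t)}-m_i^{(t+1)})=\exp(m_i^{(j)}-m_i^{(t+1)})\). This is the one-step form of the telescoping identity in Eq.~\eqref{eq:pnq_telescope}. The new tile then enters with coefficient \(\exp(m_i^{(t+1)}-m_i^{(t+1)})=\mathbf1\), which closes the induction. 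For the output accumulator, I will use \(\operatorname{Diag}(a)\operatorname{Diag}(b)=\operatorname{Diag}(a\odot b)\) and the fact that left multiplication by a diagonal matrix distributes over the sum.

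The only step that needs care is this index bookkeeping: the rescaling factor is applied to the accumulated state but not to the newly added tile, so the exponent difference is relative to the current maximum \(m_i^{(t)}\). The same argument goes through verbatim for \(\ell_i\) and \(\widetilde O_i\). That is the point of PNQ: the same \(\widehat P_{ij}\) appears in both updates.
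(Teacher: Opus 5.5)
Your proposal is correct and is essentially the paper's argument. The paper unrolls the PNQ recurrence and collapses the product of the later factors \(\alpha_i^{(r)}\), \(r=j+1,\ldots,T_c\), into \(\beta_i^{(j)}\) by the telescoping identity; your induction with the strengthened intermediate statement is the formal version of that unrolling, applying the telescoping one step at a time.
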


\begin{proof}
Unrolling Eq.~\eqref{eq:pnq_online_update}, the contribution of tile \(j\) is multiplied by every subsequent rescaling factor \(\alpha_i^{(r)}\), for \(r=j+1,\ldots,T_c\). Equation~\eqref{eq:pnq_telescope} reduces their element-wise product to \(\beta_i^{(j)}\), yielding Eq.~\eqref{eq:pnq_unrolled}.
\end{proof}

Define the effective quantized tile
\begin{equation}
\widehat P_{ij}^{\mathrm{eff}}
=
\operatorname{Diag}\!\left(\beta_i^{(j)}\right)\widehat P_{ij},
\end{equation}
and the conceptual full-row matrix
\begin{equation}
\widehat P_i^{\mathrm{eff}}
=
\left[
\widehat P_{i1}^{\mathrm{eff}},
\ldots,
\widehat P_{iT_c}^{\mathrm{eff}}
\right].
\end{equation}
Let \(\widehat V\) denote the corresponding vertical concatenation of \(\widehat V_1,\ldots,\widehat V_{T_c}\). Lemma~\ref{lemma:pnq_effective_full_row} then gives
\begin{equation}
\ell_i^{(T_c)}
=
\widehat P_i^{\mathrm{eff}}\mathbf1,
\qquad
\widetilde O_i^{(T_c)}
=
\widehat P_i^{\mathrm{eff}}\widehat V.
\end{equation}
The matrix \(\widehat P_i^{\mathrm{eff}}\) is an analytical representation of the online recurrence and is never materialized by the kernel.

\section{Full-Row Mismatch under Direct Placement}
\label{app:pnq_direct_full_row}

For the direct placement, define
\begin{equation}
\widetilde P_{ij}^{\mathrm{eff}}
=
\operatorname{Diag}\!\left(\beta_i^{(j)}\right)\widetilde P_{ij},
\qquad
\widehat P_{ij}^{\mathrm{eff}}
=
\operatorname{Diag}\!\left(\beta_i^{(j)}\right)\widehat P_{ij},
\end{equation}
and concatenate the tiles across the key sequence:
\begin{equation}
\widetilde P_i^{\mathrm{eff}}
=
\left[
\widetilde P_{i1}^{\mathrm{eff}},
\ldots,
\widetilde P_{iT_c}^{\mathrm{eff}}
\right],
\qquad
\widehat P_i^{\mathrm{eff}}
=
\left[
\widehat P_{i1}^{\mathrm{eff}},
\ldots,
\widehat P_{iT_c}^{\mathrm{eff}}
\right].
\end{equation}
The final direct-placement states are
\begin{equation}
\ell_i^{\mathrm{direct}}
=
\widetilde P_i^{\mathrm{eff}}\mathbf1,
\qquad
\widetilde O_i^{\mathrm{direct}}
=
\widehat P_i^{\mathrm{eff}}\widehat V.
\end{equation}
Hence, the final weights induced by the direct placement are
\begin{equation}
\widehat P_i^{\mathrm{direct}}
=
\operatorname{Diag}\!\left(
\widetilde P_i^{\mathrm{eff}}\mathbf1
\right)^{-1}
\widehat P_i^{\mathrm{eff}},
\end{equation}
with row sums
\begin{equation}
\widehat P_i^{\mathrm{direct}}\mathbf1
=
\left(
\widehat P_i^{\mathrm{eff}}\mathbf1
\right)
\oslash
\left(
\widetilde P_i^{\mathrm{eff}}\mathbf1
\right).
\label{eq:pnq_direct_full_rowsum}
\end{equation}
Here, \(\oslash\) denotes element-wise division between the two row-wise sum vectors. The running-max rescaling is the same in both states, but the exponential tiles are not; the final row sum therefore need not equal one.

\section{Positive PNQ Normalization Factor}
\label{app:pnq_positive_normalizer}

Fix any row \(r\) of query block \(i\). All attention rows considered in this work have a nonempty key set. Let tile \(j^\star\) contain an entry \(c^\star\) attaining the final maximum of row \(r\):
\begin{equation}
S_{ij^\star}[r,c^\star]
=
m_i^{(T_c)}[r].
\end{equation}
When tile \(j^\star\) is processed, the running maximum for this row reaches its final value:
\begin{equation}
m_i^{(j^\star)}[r]
=
m_i^{(T_c)}[r].
\end{equation}
Therefore,
\begin{equation}
\widetilde P_{ij^\star}[r,c^\star]
=
\exp\!\left(
S_{ij^\star}[r,c^\star]
-
m_i^{(j^\star)}[r]
\right)
=
1,
\end{equation}
and
\begin{equation}
\beta_i^{(j^\star)}[r]=1.
\end{equation}

Every entry of \(\widetilde P_{ij^\star}\) lies in \((0,1]\), and the MXFP4 block containing \(\widetilde P_{ij^\star}[r,c^\star]\) therefore has maximum magnitude one. Under the UOS boundary \(q^\star=7.25\),
\begin{equation}
e_{q^\star}(1)
=
\left\lceil
\log_2\left(\frac{1}{7.25}\right)
\right\rceil
=
-2.
\end{equation}
The shared scale is \(2^{-2}=1/4\), so the normalized value is
\begin{equation}
\frac{1}{2^{-2}}=4,
\end{equation}
which is exactly representable in E2M1. Hence,
\begin{equation}
\mathcal Q_\star(1)=1.
\end{equation}
Since this entry also has \(\beta_i^{(j^\star)}[r]=1\), it contributes exactly one to the final row-wise sum. Therefore,
\begin{equation}
\ell_i^{(T_c)}[r]\ge1>0.
\end{equation}

The normalization guarantee concerns the weights induced by the PNQ recurrence in exact arithmetic. It does not assert that the quantized weights equal the full-precision softmax weights. In an implementation, finite-precision accumulation and the final division may introduce ordinary floating-point rounding. PNQ removes the additional structural mismatch caused by updating \(\ell_i\) with \(\widetilde P_{ij}\) while updating \(\widetilde O_i\) with \(\widehat P_{ij}\).

\end{document}